\pdfoutput=1
\documentclass[lettersize,journal]{IEEEtran}

\usepackage[colorlinks,urlcolor=blue,linkcolor=blue,citecolor=blue]{hyperref}

\usepackage{color,array}

\usepackage{amsmath, amsfonts, amsthm, amssymb}
\usepackage{graphicx}
\usepackage{enumitem}

\usepackage{textcomp}
\usepackage{stfloats}
\usepackage{url}
\usepackage{verbatim}
\usepackage{balance}

\usepackage{graphics} %
\usepackage{epsfig} %
\usepackage{mathptmx} %
\usepackage{times} %
\usepackage{amsmath} %
\usepackage{amssymb}  %
\usepackage{siunitx}
\usepackage{hyperref}
\usepackage{multirow}
\usepackage{multicol}
\usepackage{nicefrac}
\usepackage{makecell}
\usepackage{longtable}
\usepackage{svg}
\usepackage{cite}
\usepackage{picinpar}
\usepackage[latin1]{inputenc}
\usepackage{colortbl}
\usepackage{soul}
\usepackage{multirow}
\usepackage{pifont}
\usepackage{color}
\usepackage{alltt}
\usepackage{siunitx}
\usepackage{epstopdf}
\usepackage{pbox}
\usepackage{amsfonts}
\usepackage{textcomp}
\usepackage{diagbox}
\usepackage{graphics} %
\usepackage{times} %
\usepackage{amssymb}  %
\usepackage{microtype}
\usepackage{subfigure}
\usepackage{booktabs} %
\usepackage{xfrac}
\usepackage{nicefrac}
\usepackage{multicol}
\usepackage{float}
\usepackage{array}
\usepackage{algorithm}
\usepackage{gensymb}
\usepackage{xr}
\usepackage{algpseudocode}
\usepackage{orcidlink}
\usepackage{caption}

\newtheorem{theorem}{Theorem}[section]
\newtheorem{proposition}{Proposition}[section]
\newtheorem{lemma}{Lemma}[section]

\DeclareMathOperator*{\argmax}{arg\,max}

\usepackage{eso-pic}
\usepackage{pdfpages}
\AddToShipoutPictureFG*{\AtPageLowerLeft{\put(\LenToUnit{0.75in},\LenToUnit{0.22in}){\parbox[b]{\dimexpr\paperwidth-1.5in\relax}{\centering\scriptsize
\copyright~2026 IEEE. Personal use of this material is permitted. Permission from IEEE must be obtained for all other uses, in any current or future media, including reprinting/republishing this material for advertising or promotional purposes, creating new collective works, for resale or redistribution to servers or lists, or reuse of any copyrighted component of this work in other works. Accepted for publication in IEEE Transactions on Affective Computing.}}}}

\begin{document}

\title{Evidential Multimodal Transformers with Dempster-Shafer Fusion for Emotion Recognition}

\author{
    Muhammad Usama \orcidlink{0000-0002-3834-2167} \IEEEauthorrefmark{1}\IEEEauthorrefmark{2} and Yeoreum Jung \orcidlink{0009-0008-7101-2726} \IEEEauthorrefmark{1}

    \thanks{
        \IEEEauthorrefmark{1}Braindeck Inc., Seoul, Republic of Korea (emails: usama@braindeck.net, summer@braindeck.net).
    }
    \thanks{
        \IEEEauthorrefmark{2}School of Electrical Engineering, Korea Advanced Institute of Science and Technology (KAIST), Daejeon 34141, Republic of Korea (email: usama@kaist.ac.kr).
    }
    \thanks{Corresponding author: Muhammad Usama (email: usama@kaist.ac.kr)}
    \thanks{This work was supported by Braindeck Inc. and by the National IT Industry Promotion Agency (NIPA), funded by the Ministry of Science and ICT (MSIT), Korea, under Grant No.PJT-25-040359.}
}

\markboth{IEEE Transactions on Affective Computing}%
{Usama \MakeLowercase{\textit{and}} Jung: Evidential Multimodal Transformers with Dempster-Shafer Fusion}

\maketitle

\begin{abstract}
Multimodal emotion recognition faces the challenge of overconfidence, where standard softmax-based methods assign high probability even when sensor data is corrupted or contradictory. For instance, facial expressions may suggest positive valence while vocal prosody indicates negative arousal, creating semantic conflict that standard averaging suppresses. We present an Evidential Multimodal Transformer with Dempster-Shafer Fusion (EMT-DS) that addresses this problem through formal uncertainty quantification. EMT-DS models predictions as Subjective Opinions (Dirichlet distributions) rather than point estimates, separating epistemic uncertainty (lack of evidence) from aleatoric uncertainty (sensor noise). Modality-specific transformer encoders generate belief mass distributions, which we fuse using Dempster's Rule of Combination. This mechanism mathematically distinguishes agreement from conflict by normalizing the joint belief mass with a conflict coefficient ($1/(1-c)$), so that contradictory evidence is explicitly quantified as conflict and the fused decision probability becomes diffuse rather than being forced into an unjustified consensus.

We demonstrate that EMT-DS achieves consistent performance gains across four benchmark datasets under subject-independent and speaker-independent evaluation. On IEMOCAP (audio+text), EMT-DS achieves 84.3\% accuracy versus 81.7\% for the best baseline DCT (+2.6 points, $p < 0.01$). On DEAP (EEG+physiological), under leave-one-subject-out evaluation we obtain 82.4\% accuracy versus 78.1\% for the strongest baseline (+4.3 points). On the highly imbalanced KEMDy20, EMT-DS reaches 96.8\% accuracy, and on the large-scale CMU-MOSEI benchmark it improves accuracy by 2.4 points over the strongest evidential baseline. Through controlled ablation, we demonstrate that EMT-DS reduces Expected Calibration Error from 0.153 (standard softmax transformer) to 0.042. Furthermore, the model maintains 71.2\% accuracy under severe noise ($\sigma = 1.0$ additive Gaussian corruption on audio features) where attention baselines drop to 45.3\%, and degrades gracefully when entire modalities are absent. These gains are achieved with $\mathcal{O}(MK^2)$ fusion complexity, significantly lower than the $\mathcal{O}(M^2d^2)$ of cross-modal attention. Finally, we demonstrate that the learned conflict metric effectively flags potential errors, enabling a downstream selective-review strategy that captures 80\% of misclassifications by reviewing just 20\% of samples. This capability to reliably reject unsafe predictions is critical for deployment in clinical and safety-critical systems.
\end{abstract}

\begin{IEEEkeywords}
Multimodal Emotion Recognition, Evidential Deep Learning, Subjective Logic, Dempster-Shafer Theory, Uncertainty Quantification, Confidence Calibration, Conflict Modeling, Trustworthy Multimodal Fusion
\end{IEEEkeywords}

\section{Introduction}
\label{sec:introduction}

Emotion recognition systems enable advanced human-computer interaction in mental health monitoring, intelligent tutoring, autonomous vehicles, and social robotics~\cite{picard2000affective,gunes2021emotion}. Human emotions manifest across multiple modalities: facial expressions convey valence, vocal prosody signals arousal, physiological signals reveal autonomic responses, and linguistic content provides semantic context~\cite{poria2017review,baltrusaitis2019multimodal}. Multimodal systems integrating these heterogeneous streams consistently outperform unimodal approaches~\cite{le2023multi,zhao2023tdfnet}.

\subsection{Problem Definition}
Despite advances in transformer-based multimodal fusion~\cite{vaswani2017attention,dhanith2024multimodal,liu2025tacfn}, existing methods face a fundamental problem: the inability to represent ignorance. We distinguish the task of emotion recognition (mapping sensors to labels) from the core problem of uncertainty quantification. Standard fusion methods, such as concatenation, attention~\cite{maji2023multimodal,waligora2024joint}, or mixture-of-experts~\cite{han2024fusemoe,mengara2025cag}, map inputs to a probability distribution on the simplex $\Delta^{K-1}$. The obstacle is the softmax normalization constraint $\sum \pi_k = 1$, which forces a deterministic distribution even when evidence is weak or conflicting. This ``forced confidence'' leads to brittle failure modes: (1) sensor failure where noise is interpreted as a confident prediction; (2) semantic conflict where contradictory modalities (e.g., sarcasm) are averaged out rather than flagged. Our solution is to reformulate the output as a Dirichlet distribution over the simplex, explicitly separating belief from uncertainty using Subjective Logic.

We present Evidential Multimodal Transformer with Dempster-Shafer Fusion (EMT-DS), integrating these frameworks. First, each modality is processed by specialized transformer encoders~\cite{dosovitskiy2021image,baevski2020wav2vec,devlin2019bert} followed by evidential layers predicting Dirichlet parameters. The loss combines Bayes risk with KL regularization that penalizes overconfidence on noisy inputs~\cite{sensoy2018evidential}. Second, a differentiable Dempster-Shafer (D-S) fusion layer computes belief masses, uncertainty, and conflict for each modality pair. Unlike attention mechanisms that learn arbitrary weights~\cite{maji2023multimodal}, our fusion follows mathematically grounded belief combination semantics, explicitly using conflict to detect complex emotional states or sensor failures. Third, we show that under low conflict, fused uncertainty decreases below individual modalities, while cross-modal disagreement is explicitly quantified as conflict and drives the fused decision probability toward a non-committal state, providing robustness. The KL regularizer also drives the evidence toward the vacuous opinion for out-of-distribution (OOD) samples, so that gradients vanish and noise is not fit.

We delineate the novelty of EMT-DS precisely. Evidential deep learning and Subjective Logic are established tools that we build upon rather than claim as contributions. The technical novelty lies in three elements. First, we design a fully differentiable Dempster-Shafer fusion layer that backpropagates gradients through the conflict coefficient, in contrast to prior Dempster-Shafer emotion-recognition methods~\cite{xu2020emotion,nemati2019hybrid,soroush2019emotion} that apply combination rules post hoc to the outputs of separately trained classifiers. Second, we treat the cross-modal conflict as an explicit supervised training signal and as a deployable abstention score for human review, rather than as a quantity to be normalized away. Third, we provide a theoretical account connecting the fusion to Bayes-optimal inference and connecting the evidential regularizer to out-of-distribution uncertainty, and we validate it under subject-independent, missing-modality, and backbone-agnostic protocols. Unlike attention or gating mechanisms that learn weights with no fixed semantics~\cite{maji2023multimodal,han2024fusemoe}, the belief and conflict quantities in EMT-DS carry a fixed probabilistic meaning and make cross-modal disagreement explicit through the conflict mass.

Validation on IEMOCAP~\cite{busso2008iemocap}, DEAP~\cite{koelstra2012deap}, KEMDy20~\cite{kemdataset2020}, and the large-scale CMU-MOSEI~\cite{zadeh2018multimodal} benchmark shows EMT-DS outperforms transformer-based~\cite{zhao2023tdfnet,le2023multi,tsai2019multimodal}, mixture-of-experts~\cite{mengara2025cag}, evidential~\cite{han2022trusted}, and cross-modal attention~\cite{maji2023multimodal,zhang2024cross} methods by 2.3 to 4.3 percentage points in accuracy and 2.4 to 4.2 points in macro F1-score on the subject-independent and speaker-independent benchmarks (IEMOCAP, DEAP, and KEMDy20), and remains ahead of the strongest recent 2026 baseline on the large-scale CMU-MOSEI corpus, while reducing Expected Calibration Error by up to a factor of three. Learned conflict patterns reveal systematic cross-modal dissonance in specific emotion pairs, providing interpretable insights.

Section~\ref{sec:related} reviews related work. Section~\ref{sec:methodology} presents the mathematical formulation. Section~\ref{sec:experiments} describes datasets, reports results, comparisons, and provides detailed ablation studies. Section~\ref{sec:discussion} analyzes key findings. Section~\ref{sec:conclusion} concludes with implications and future directions.

\section{Related Work}
\label{sec:related}

\subsection{Multimodal Emotion Recognition}
\label{subsec:multimodal_emotion}

Multimodal emotion recognition has evolved from early concatenation~\cite{poria2017review} to deep learning fusion. Early fusion concatenates low-level features~\cite{baltrusaitis2019multimodal}, enabling joint representations but requiring synchronization. Late fusion combines classifier outputs~\cite{rifkin2004}, providing robustness to missing modalities but discarding low-level interactions. Hybrid approaches~\cite{nemati2019hybrid} balance both strategies. Attention-based fusion~\cite{maji2023multimodal,waligora2024joint} dynamically weights modalities, but lacks a formal mechanism to reject disagreements.

Transformer architectures~\cite{vaswani2017attention} dominate recent work. Le et al.~\cite{le2023multi} proposed transformer fusion with emotion-level representation learning. Zhao et al.~\cite{zhao2023tdfnet} introduced TDFNet with deep-scale fusion across temporal resolutions. Maji et al.~\cite{maji2023multimodal} combined cross-modal transformers with temporal self-attention. Tsai et al.~\cite{tsai2019multimodal} introduced the Multimodal Transformer (MulT) with directional pairwise cross-modal attention for unaligned sequences, which remains a strong general-purpose baseline. Liu et al.~\cite{liu2025tacfn} proposed TACFN with adaptive cross-modal fusion, while Yi et al.~\cite{yi2025hyfuser} introduced dual cross-modal attention. Beyond attention, recent conversational emotion-recognition methods model richer interaction structure: Shi et al.~\cite{shi2025identity} drive multimodal fusion with speaker-identity and modality attributes through a speaker-centric graph, and Chen and Shi~\cite{chen2025dynamic} construct dynamic monologic and dialogic sub-hypergraphs to capture interaction-pattern-specific emotional relationships. These graph- and attribute-based designs improve accuracy by structuring cross-utterance interactions, but, like the attention and mixture-of-experts models, they produce softmax outputs without a formal account of uncertainty or cross-modal conflict; EMT-DS is complementary, contributing belief and conflict quantities with fixed probabilistic meaning that could augment such architectures. Mixture-of-experts approaches like FuseMoE~\cite{han2024fusemoe} and CAG-MoE~\cite{mengara2025cag} provide modality specialization with sparse activation. However, all these methods rely on softmax outputs lacking formal uncertainty quantification, making them brittle when modalities conflict or sensors fail, the gap that EMT-DS addresses through evidential Dirichlet modeling and conflict-aware Dempster-Shafer fusion.

Research in 2026 has continued to pursue robustness and reliability in multimodal fusion. Wei et al.~\cite{wei2026unbiased} debias dynamic fusion with a noise-aware uncertainty estimator, Huang et al.~\cite{huang2026complementarity} route spectral bands under complementarity supervision, and Zhang et al.~\cite{zhang2026trustworthy} combine dynamic weighting with Dempster-Shafer evidence for trustworthy dual-modal inference. Closest to our setting, Grzeczkowicz et al.~\cite{grzeczkowicz2026uncertainty} also pair Dempster-Shafer theory with Dirichlet evidence for emotion recognition, but as a training-free procedure applied post hoc to the logits of a frozen model. EMT-DS differs in that its Dempster-Shafer fusion is learned end to end, with gradients flowing through the conflict coefficient, with conflict serving as an explicit supervised signal and abstention score, and with the accompanying calibration and out-of-distribution analysis that a post-hoc wrapper does not provide.

\subsection{Physiological-Based Emotion Recognition}
\label{subsec:physiological_emotion}

Physiological signals (EEG, EDA, HRV) directly measure autonomic responses to emotional stimuli~\cite{jenke2014feature,boucsein2012electrodermal}. On the DEAP benchmark~\cite{koelstra2012deap}, Sartipi et al.~\cite{sartipi2023hybrid} proposed spatiotemporal attention with graph-based smoothing. Yin et al.~\cite{yin2024research} demonstrated complementarity between EEG and electrooculography. Zhang et al.~\cite{zhang2024cross} introduced cross-modal credibility modeling for EEG systems with learned reliability scores. These credibility measures are heuristic rather than grounded in formal uncertainty frameworks, unlike EMT-DS's Subjective Logic foundation.

\subsection{Evidential Deep Learning and Subjective Logic}
\label{subsec:evidential_learning}

Evidential deep learning~\cite{sensoy2018evidential} reformulates classification as predicting Dirichlet parameters, with strength directly quantifying epistemic uncertainty. Li et al.~\cite{li2024hyper} extended this to hierarchical settings, while Li et al.~\cite{li2022region} applied it to medical segmentation with improved noise robustness. Subjective Logic~\cite{josang2016subjective} provides formal algebra for belief reasoning through opinions $\omega = (\mathbf{b}, u, \mathbf{a})$ with belief masses, uncertainty, and base rates. Koksalmis and Kabak~\cite{xu2020expert} applied Dempster-Shafer combination to large-scale sensor fusion, showing that explicit conflict handling improves classification reliability.

Recent work explored evidential multimodal fusion in other domains. Han et al.~\cite{han2022trusted} proposed trusted multi-view classification, which characterizes each view by a Dirichlet distribution and integrates views through Dempster-Shafer combination; it is the closest evidential-fusion baseline to our setting, although it targets generic multi-view inputs and does not model temporal emotional dynamics or treat conflict as a supervised signal. Huang et al.~\cite{huang2025deep} proposed deep evidential fusion for medical segmentation. Mohey El-Din et al.~\cite{mohey2024adaptive} developed adaptive fusion using Dempster-Shafer theory for visual-textual object recognition. However, these approaches focus on static images rather than temporal emotion dynamics. EMT-DS uniquely integrates evidential transformers with differentiable Dempster-Shafer fusion for emotion recognition, enabling end-to-end learning where conflict signals complex emotional states rather than being discarded.

\subsection{Uncertainty Quantification in Deep Learning}
\label{subsec:uncertainty_quantification}

Bayesian neural networks~\cite{gal2016dropout} and ensemble methods~\cite{kendall2017uncertainties} enable uncertainty estimation but require multiple forward passes, increasing cost. Gal and Ghahramani~\cite{gal2016dropout} interpreted dropout as Bayesian approximation for Monte Carlo sampling. Kendall and Gal~\cite{kendall2017uncertainties} distinguished aleatoric and epistemic uncertainty for safety-critical applications. Evidential methods offer single-pass uncertainty through predictive distribution parameterization. Pandey and Yu~\cite{pandey2023learn} showed evidential networks accumulate evidence across training samples, improving generalization. EMT-DS leverages this for multimodal emotion recognition, where evidential encoding captures varying modality reliability while Dempster-Shafer fusion provides formal conflict resolution.

\subsection{Dempster-Shafer Theory in Emotion Recognition}
\label{subsec:dempster_shafer_emotion}

Dempster-Shafer Theory~\cite{dempster1968generalization,shafer1976mathematical} has limited application to emotion recognition. Xu et al.~\cite{xu2020emotion} derived belief assignments from classifier confidences for multimodal video emotion recognition. Nemati et al.~\cite{nemati2019hybrid} employed weighted belief combination in hybrid latent fusion. Soroush et al.~\cite{soroush2018novel,soroush2019emotion} applied modified Dempster-Shafer theory to EEG recognition with nonlinear features. These treat Dempster-Shafer as post-hoc fusion on pre-trained outputs.

EMT-DS differs in three concrete respects: (1) Dirichlet-parameterized evidential layers are embedded in the architecture for end-to-end belief learning; (2) differentiable Dempster-Shafer fusion backpropagates gradients through conflict computation; (3) conflict serves as an explicit signal for complex emotional states rather than being normalized away. This enables learning when and why modalities conflict, providing both improved accuracy (2.3 to 4.3 percentage points on the subject-independent and speaker-independent benchmarks) and interpretable uncertainty estimates.

\section{Evidential Multimodal Transformer with Dempster-Shafer Fusion}
\label{sec:methodology}

This section presents the mathematical formulation of EMT-DS. We begin by establishing notation and the problem formulation, then detail the evidential encoding mechanism, the Dempster-Shafer fusion layer, the training procedure, and conclude with theoretical analysis. Figure~\ref{fig:architecture} provides an overview of the complete framework, illustrating the flow from raw multimodal inputs through transformer encoders, evidential heads, and Dempster-Shafer fusion to the final emotion prediction.

\subsection{Notation and Problem Formulation}
\label{subsec:problem_formulation}

Consider a multimodal emotion recognition task with $K$ discrete emotion classes. Let $\mathcal{X} = \{\mathbf{x}^1, \dots, \mathbf{x}^M\}$ denote the set of inputs from $M$ heterogeneous modalities, where $\mathbf{x}^m$ represents the raw input for modality $m \in \{1, \ldots, M\}$. In our experiments, we consider $M \in \{2, 3\}$ modalities depending on the dataset: audio waveforms, video frames, transcribed text, and physiological signals (EEG, EDA, or IBI). The objective is to predict the emotion label $y \in \{1, \ldots, K\}$ from $\mathcal{X}$.

Conventional deep learning approaches learn a function $f_\theta: \mathcal{X} \rightarrow \Delta^{K-1}$ that maps inputs to the $(K-1)$-dimensional probability simplex. Let $\boldsymbol{\pi} = [\pi_1, \ldots, \pi_K]^\top$ denote the latent random variable representing the true class probability distribution. Standard softmax classifiers estimate a point mass $\hat{\boldsymbol{\pi}}$, but this forces a deterministic distribution even when the input evidence is weak or conflicting. This limitation becomes problematic in three scenarios: (1) sensor failure where modality $m$ produces out-of-distribution or corrupted data; (2) semantic conflict where modalities provide contradictory signals; and (3) incomplete data where certain modalities are missing.

We reformulate the problem using Subjective Logic~\cite{josang2016subjective}, which represents knowledge about a proposition through a Subjective Opinion. For each modality $m$ and emotion class $k$, we define a Subjective Opinion $\omega^m$ as a triplet:
\begin{equation*}
\omega^m = (\mathbf{b}^m, u^m, \mathbf{a}^m),
\end{equation*}
where $\mathbf{b}^m = [b_1^m, \ldots, b_K^m]^{\top}$ is the belief mass vector, with $b_k^m \in [0,1]$ representing the degree of belief that modality $m$ assigns to emotion class $k$; $u^m \in [0,1]$ is the epistemic uncertainty (also called vacuity), representing the lack of evidence due to insufficient or ambiguous data; and $\mathbf{a}^m = [a_1^m, \ldots, a_K^m]^{\top}$ is the base rate (prior distribution), typically set to the uniform distribution $a_k^m = 1/K$ in the absence of domain knowledge. These components satisfy the additivity constraint:
\begin{equation}
\sum_{k=1}^K b_k^m + u^m = 1.
\label{eq:additivity_constraint}
\end{equation}
The projected probability vector $\mathbf{p}^m = [p_1^m, \ldots, p_K^m]^\top$ represents the expectation of $\boldsymbol{\pi}$ under the Dirichlet distribution:
\begin{equation}
p_k^m = \mathbb{E}[\pi_k] = b_k^m + u^m \cdot a_k^m.
\label{eq:projected_probability}
\end{equation}

This formulation provides a critical advantage: when evidence is strong, $u^m \approx 0$ and $p_k^m \approx b_k^m$, yielding confident predictions. When evidence is weak, $u^m \approx 1$ and $p_k^m \approx a_k^m$, reverting to the prior. Importantly, the uncertainty $u^m$ is explicitly quantified rather than hidden in probability mass spread across classes.

\begin{figure*}[t]
\centering
\includegraphics[width=0.7\textwidth,trim={0 0 0 0},clip]{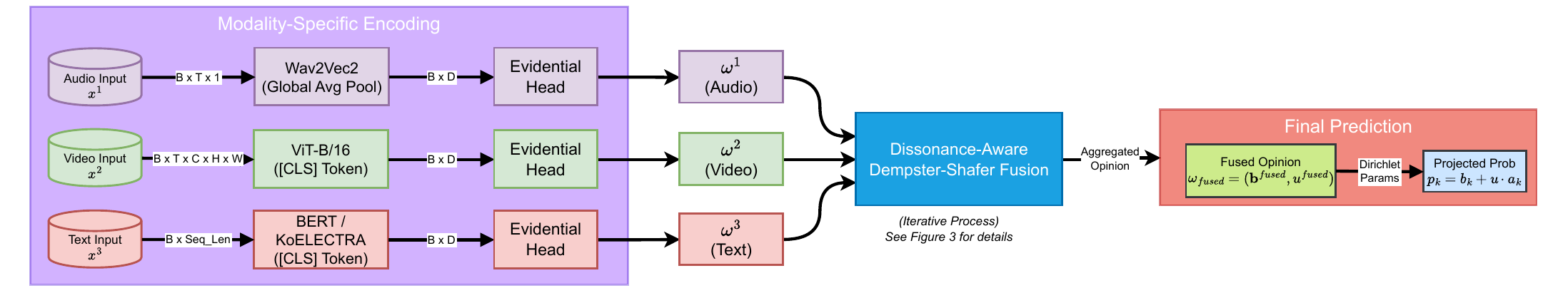}
\caption{Overall architecture of EMT-DS. Modality-specific encoders (e.g., Wav2Vec2, ViT-B/16, BERT/KoELECTRA, and a physiological 1D-CNN) extract features $h^m \in \mathbb{R}^d$ from heterogeneous inputs such as audio, video, text, and physiological signals. Evidential heads generate Subjective Opinions $\omega^m = (\mathbf{b}^m, u^m, \mathbf{a}^m)$ by predicting Dirichlet parameters via softplus-activated layers (Eq.~\ref{eq:evidence_layer}), yielding belief masses and uncertainty (Eq.~\ref{eq:belief_uncertainty}). The Dempster-Shafer fusion module iteratively combines opinions (Eqs.~\ref{eq:fused_belief},~\ref{eq:fused_uncertainty}), computing conflict $c$ to quantify cross-modal disagreement. The fused opinion produces final class probabilities via Eq.~\ref{eq:projected_probability}.}
\label{fig:architecture}
\end{figure*}

\subsection{Modality-Specific Evidential Encoding}
\label{subsec:evidential_encoding}

Each modality requires specialized processing to extract discriminative features. We employ transformer-based encoders tailored to the characteristics of each input modality, followed by an evidential head that generates Subjective Opinions. We employ transformer-based encoders to capture long-range temporal dependencies inherent in emotional expression (e.g., prosodic contours in audio or micro-expressions in video) which are often missed by local convolution operations. The complete evidential encoding pipeline is illustrated schematically in the Supplementary Material.

\subsubsection{Transformer-Based Feature Extraction}

For modality $m$, we apply a domain-specific transformer encoder $\mathcal{T}^m$ to extract a latent representation $\mathbf{h}^m \in \mathbb{R}^d$:
\begin{equation}
\mathbf{h}^m = \mathcal{T}^m(\mathbf{x}^m; \theta^m),
\label{eq:feature_extraction}
\end{equation}
where $\theta^m$ denotes the learnable parameters. For audio waveforms, we employ Wav2Vec2-Large~\cite{baevski2020wav2vec}, extracting the final hidden state with global average pooling to obtain $\mathbf{h}^{\text{audio}} \in \mathbb{R}^{1024}$. For facial expressions, we use Vision Transformer (ViT)~\cite{dosovitskiy2021image} with patch size 16, extracting the [CLS] token embedding to yield $\mathbf{h}^{\text{video}} \in \mathbb{R}^{768}$. For transcribed speech, we employ pretrained BERT~\cite{devlin2019bert} or KoELECTRA~\cite{park2020koelectra} for Korean text, using the [CLS] token representation to provide $\mathbf{h}^{\text{text}} \in \mathbb{R}^{768}$. For physiological signals (EEG, EDA, IBI), we use 1D convolutional networks followed by temporal self-attention to capture local patterns and long-range dependencies, producing $\mathbf{h}^{\text{physio}} \in \mathbb{R}^{512}$. All pretrained encoders are used as frozen feature extractors so that the reported gains are attributable to the evidential fusion mechanism rather than to encoder fine-tuning; the evidential heads and fusion layers are trained from scratch (Section~\ref{sec:experiments}).

\subsubsection{Evidential Output Layer}

Our primary technical contribution lies in replacing the standard softmax classifier with an evidential head that outputs Dirichlet distribution parameters. Following the evidential deep learning framework~\cite{sensoy2018evidential}, we model the class probability distribution $\boldsymbol{\pi}$ as a random variable drawn from a Dirichlet distribution:
\begin{equation}
\boldsymbol{\pi} \sim \text{Dir}(\boldsymbol{\alpha}^m) = \frac{1}{B(\boldsymbol{\alpha}^m)} \prod_{k=1}^K \pi_k^{\alpha_k^m - 1},
\label{eq:dirichlet_distribution}
\end{equation}
where $\boldsymbol{\alpha}^m = [\alpha_1^m, \ldots, \alpha_K^m]^{\top}$ are the concentration parameters and $B(\boldsymbol{\alpha}^m)$ is the multivariate beta function.

To ensure non-negativity of the Dirichlet parameters, we apply a softplus activation to a linear projection of the feature representation. Let $\epsilon = 10^{-6}$ denote the numerical stability constant (standard float32 machine epsilon). The raw evidence vector is computed as:
\begin{equation}
\tilde{\mathbf{e}}^m = \text{Softplus}(\mathbf{W}^m \mathbf{h}^m + \mathbf{b}^m) + \epsilon,
\label{eq:evidence_layer}
\end{equation}
where $\mathbf{W}^m \in \mathbb{R}^{K \times d}$ and $\mathbf{b}^m \in \mathbb{R}^K$ are learnable parameters, and $\text{Softplus}(x) = \log(1 + e^x)$ ensures positivity. To make the evidence magnitude bounded by construction rather than merely small in practice, we project the raw evidence onto the $L_1$ ball of radius $E_{\max}$:
\begin{equation}
\mathbf{e}^m = \tilde{\mathbf{e}}^m \cdot \min\!\left(1, \frac{E_{\max}}{\|\tilde{\mathbf{e}}^m\|_1}\right), \qquad E_{\max} = 1000.
\label{eq:evidence_cap}
\end{equation}
The projection is the identity whenever $\|\tilde{\mathbf{e}}^m\|_1 \le E_{\max}$, and otherwise rescales the evidence so that $\|\mathbf{e}^m\|_1 = E_{\max}$; in either case $\|\mathbf{e}^m\|_1 = \min(\|\tilde{\mathbf{e}}^m\|_1, E_{\max}) \le E_{\max}$ and $\mathbf{e}^m \ge \mathbf{0}$. Because $E_{\max}$ is set an order of magnitude above the empirically observed evidence norm ($\|\mathbf{e}^m\|_1 \approx 100$, Section~\ref{subsec:robustness}), the cap leaves the low-evidence regime untouched, and in particular preserves the vacuous limit $\mathbf{e}^m = \mathbf{0}$ ($u^m = 1$), while guaranteeing the finite evidence bound used in Theorem~\ref{thm:zadeh_resolution}. The vector $\mathbf{e}^m = [e_1^m, \ldots, e_K^m]^{\top}$ represents the bounded evidence for each class.

The Dirichlet concentration parameters are defined as:
\begin{equation}
\alpha_k^m = e_k^m + 1, \quad k = 1, \ldots, K.
\label{eq:alpha_definition}
\end{equation}
The additive constant 1 ensures $\alpha_k^m \geq 1$, corresponding to the non-informative Dirichlet prior when $e_k^m = 0$.

The mapping between Dirichlet parameters and Subjective Logic components is established through the Dirichlet strength:
\begin{equation}
S^m = \sum_{k=1}^K \alpha_k^m = \sum_{k=1}^K (e_k^m + 1) = \left(\sum_{k=1}^K e_k^m\right) + K.
\label{eq:dirichlet_strength}
\end{equation}
The belief mass and uncertainty are then computed as:
\begin{equation}
b_k^m = \frac{e_k^m}{S^m}, \quad u^m = \frac{K}{S^m}.
\label{eq:belief_uncertainty}
\end{equation}
Note that these definitions automatically satisfy the additivity constraint in Eq.~\ref{eq:additivity_constraint}.

Uncertainty Decomposition: This framework explicitly separates two types of uncertainty. Epistemic uncertainty (or vacuity) is captured by $u^m$, reflecting the lack of evidence (e.g., OOD data) where the model cannot form a distinct opinion. Aleatoric uncertainty (data noise or ambiguity) is captured by the entropy of the belief distribution $\mathbf{b}^m$. For example, a sample with $\mathbf{b}^m = [0.45, 0.45, 0.1, 0]$ has low epistemic uncertainty ($u^m=0$) but high aleatoric uncertainty due to class overlap, whereas a sample with $\mathbf{b}^m = [0, 0, 0, 0]$ and $u^m=1$ is purely epistemically uncertain.

To make the Subjective Logic quantities accessible to readers outside the Dempster-Shafer and Subjective Logic communities, Table~\ref{tab:intuition} summarizes their intuitive meaning and contrasts each with the familiar quantity it is most often confused with. The central distinction is that belief mass is not softmax confidence and uncertainty is not entropy: belief masses need not sum to one, and the residual mass $u^m$ explicitly measures missing evidence rather than the spread of a fixed distribution. As a concrete illustration with $K=4$ classes, an audio modality producing evidence $\mathbf{e}^{\text{audio}} = [8, 0, 0, 0]$ has Dirichlet strength $S = 8 + 4 = 12$, belief $b_1 = 8/12 = 0.667$, and uncertainty $u = 4/12 = 0.333$, giving a projected probability $p_1 = 0.667 + 0.333 \times 0.25 = 0.75$. A second modality producing no evidence, $\mathbf{e} = [0,0,0,0]$, yields $\mathbf{b} = \mathbf{0}$ and $u = 1$, expressing total ignorance. A standard softmax classifier cannot represent this state: a uniform softmax output $[0.25, 0.25, 0.25, 0.25]$ and the vacuous opinion ($u=1$) yield identical point probabilities yet carry opposite epistemic meaning, a distinction EMT-DS preserves through $u$.

\begin{table}[t]
\centering
\caption{Intuitive interpretation of the Subjective Logic quantities used in EMT-DS, and the familiar quantities they should not be confused with.}
\label{tab:intuition}
\resizebox{\columnwidth}{!}{%
\begin{tabular}{lll}
\toprule
\textbf{Quantity} & \textbf{Intuitive meaning} & \textbf{Not the same as} \\
\midrule
Belief mass $b_k^m$ & Evidence-backed support for class $k$ & Softmax confidence \\
Uncertainty $u^m$ & Amount of missing evidence (vacuity) & Predictive entropy \\
Base rate $a_k^m$ & Prior used when evidence is absent & A learned gating weight \\
Conflict $c$ & Extent of cross-modal disagreement & A reliability or similarity score \\
Projected prob. $p_k^m$ & Decision-time probability $b_k^m + u^m a_k^m$ & Raw softmax output \\
\bottomrule
\end{tabular}
}
\end{table}

\subsection{Dempster-Shafer Fusion}
\label{subsec:dempster_shafer_fusion}

Given Subjective Opinions {$\omega^1, \ldots, \omega^M$} from $M$ modalities, we must fuse them into a joint opinion $\omega^{\text{fused}}$. Traditional approaches such as weighted averaging or concatenation do not provide a mechanism for handling contradictory evidence. We employ Dempster's Rule of Combination~\cite{dempster1968generalization,shafer1976mathematical}, which explicitly computes the conflict between opinions and adjusts the fused belief accordingly. We favor Dempster-Shafer over Yager or simple Bayesian fusion because it strictly penalizes conflicting evidence through the normalization term, providing a mathematically grounded rejection of inconsistent multimodal inputs (e.g., in cases of sensor failure or sarcasm) rather than averaging them out. This contrasts with Yager's rule, which redistributes conflict to the universal set (uncertainty) without re-weighting the beliefs, or Bayesian averaging which ignores conflict entirely. The computational flow of the fusion module is illustrated schematically in the Supplementary Material.

\subsubsection{Pairwise Belief Fusion}

Consider two modalities $i$ and $j$ with Subjective Opinions $\omega^i = (\mathbf{b}^i, u^i, \mathbf{a}^i)$ and $\omega^j = (\mathbf{b}^j, u^j, \mathbf{a}^j)$. The fusion operation $\omega^i \oplus \omega^j$ proceeds as follows:

The conflict (or dissonance) measures the degree to which the two opinions assign belief to mutually exclusive hypotheses:
\begin{equation*}
c_{ij} = \sum_{k=1}^K \sum_{\ell \neq k} b_k^i b_\ell^j.
\end{equation*}
This simplifies to:
\begin{equation}
c_{ij} = \left(\sum_{k=1}^K b_k^i\right) \left(\sum_{k=1}^K b_k^j\right) - \sum_{k=1}^K b_k^i b_k^j = (1 - u^i)(1 - u^j) - \sum_{k=1}^K b_k^i b_k^j.
\label{eq:conflict_simplified}
\end{equation}
Intuitively, $c_{ij} = 0$ when the modalities agree perfectly (e.g., both strongly support ``happy''), while $c_{ij} \rightarrow 1$ when they completely disagree (e.g., one supports ``happy'' and the other ``angry'').

The fusion rule renormalizes the belief mass by discarding the conflicting mass. Let $n_{ij} \in [1, \infty)$ denote the conflict-based normalization factor:
\begin{equation}
n_{ij} = \frac{1}{1 - c_{ij}}.
\label{eq:normalization_factor}
\end{equation}
When conflict is low, $n_{ij} \approx 1$. When conflict is high, $n_{ij}$ becomes large, redistributing mass to account for the inconsistency.

The fused belief mass for class $k$ is:
\begin{equation}
b_k^{i \oplus j} = n_{ij} \left( b_k^i b_k^j + b_k^i u^j + b_k^j u^i \right).
\label{eq:fused_belief}
\end{equation}
The three terms represent: (1) both modalities agreeing on class $k$ ($b_k^i b_k^j$), (2) modality $i$ supports $k$ while $j$ is uncertain ($b_k^i u^j$), and (3) modality $j$ supports $k$ while $i$ is uncertain ($b_k^j u^i$).

The fused uncertainty is:
\begin{equation}
u^{i \oplus j} = n_{ij} \left( u^i u^j \right).
\label{eq:fused_uncertainty}
\end{equation}
This product structure implies that the combined uncertainty decreases exponentially as we add informative modalities (those with low $u$).

\subsubsection{Multi-Modal Fusion Algorithm}

For $M > 2$ modalities, we iteratively apply pairwise fusion. Algorithm~\ref{alg:dempster_fusion} summarizes the forward pass of EMT-DS.

\begin{algorithm}[t]
\caption{EMT-DS Forward Pass}
\label{alg:dempster_fusion}
\begin{algorithmic}[1]
\Require Multimodal inputs $\mathcal{X} = \{x^1, \ldots, x^M\}$, number of classes $K$
\Ensure Fused belief $\mathbf{b}^{\text{fused}}$, fused uncertainty $u^{\text{fused}}$, conflict $c^{\text{total}}$
\State // Step 1: Modality-Specific Evidence Generation
\For{$m = 1$ to $M$}
    \State $\mathbf{h}^m \gets \mathcal{T}^m(\mathbf{x}^m; \theta^m)$ \Comment{Extract features via Eq.~\ref{eq:feature_extraction}}
    \State $\tilde{\mathbf{e}}^m \gets \text{Softplus}(\mathbf{W}^m \mathbf{h}^m + \mathbf{b}^m) + \epsilon$ \Comment{Raw evidence via Eq.~\ref{eq:evidence_layer}}
    \State $\mathbf{e}^m \gets \tilde{\mathbf{e}}^m \cdot \min(1, E_{\max}/\|\tilde{\mathbf{e}}^m\|_1)$ \Comment{Evidence cap via Eq.~\ref{eq:evidence_cap}}
    \State $S^m \gets \sum_{k=1}^K (e_k^m + 1)$ \Comment{Dirichlet strength via Eq.~\ref{eq:dirichlet_strength}}
    \State $b_k^m \gets e_k^m / S^m$, $u^m \gets K / S^m$ for $k = 1, \ldots, K$ \Comment{Beliefs and uncertainty via Eq.~\ref{eq:belief_uncertainty}}
\EndFor
\State
\State // Step 2: Multi-source conflict (permutation-invariant, Eq.~\ref{eq:total_conflict})
\State $c^{\text{total}} \gets \dfrac{2}{M(M-1)} \sum_{1 \le i < j \le M} \big[(1 - u^i)(1 - u^j) - \sum_{k=1}^K b_k^i b_k^j\big]$
\State
\State // Step 3: Iterative Dempster-Shafer belief fusion
\State Initialize $\mathbf{b}^{\text{agg}} \gets \mathbf{b}^1$, $u^{\text{agg}} \gets u^1$
\For{$m = 2$ to $M$}
    \State $c \gets (1 - u^{\text{agg}})(1 - u^m) - \sum_{k=1}^K b_k^{\text{agg}} b_k^m$ \Comment{Pairwise conflict, Eq.~\ref{eq:conflict_simplified}}
    \State $n \gets 1 / (1 - c)$ \Comment{Normalization via Eq.~\ref{eq:normalization_factor}}
    \For{$k = 1$ to $K$}
        \State $b_k^{\text{temp}} \gets n \cdot (b_k^{\text{agg}} b_k^m + b_k^{\text{agg}} u^m + b_k^m u^{\text{agg}})$ \Comment{Via Eq.~\ref{eq:fused_belief}}
    \EndFor
    \State $u^{\text{temp}} \gets n \cdot (u^{\text{agg}} u^m)$ \Comment{Via Eq.~\ref{eq:fused_uncertainty}}
    \State $\mathbf{b}^{\text{agg}} \gets \mathbf{b}^{\text{temp}}$, $u^{\text{agg}} \gets u^{\text{temp}}$
\EndFor
\State $\mathbf{b}^{\text{fused}} \gets \mathbf{b}^{\text{agg}}$, $u^{\text{fused}} \gets u^{\text{agg}}$
\State \Return $\mathbf{b}^{\text{fused}}$, $u^{\text{fused}}$, $c^{\text{total}}$
\end{algorithmic}
\end{algorithm}

Implementation Note: To ensure numerical stability when $c_{ij}$ approaches 1 (near-total conflict), we apply a small threshold: if $c_{ij} > 0.999$, we clip the conflict. Specifically, the normalization term is capped at $n_{ij} = 1/(1 - 0.999) = 1000$ to prevent floating-point overflow while preserving the gradient signal that high conflict reduces the effective weight of the fusion.

For more than two modalities we summarize cross-modal disagreement by the mean pairwise conflict computed from the original per-modality opinions,
\begin{equation}
c^{\text{total}} = \frac{2}{M(M-1)} \sum_{1 \le i < j \le M} c_{ij}, \qquad c_{ij} = (1-u^i)(1-u^j) - \sum_{k=1}^K b_k^i b_k^j.
\label{eq:total_conflict}
\end{equation}
Each pairwise term satisfies $c_{ij} = \sum_{k \ne \ell} b_k^i b_\ell^j \in [0,1)$, so $c^{\text{total}} \in [0,1)$, and because the sum ranges over unordered pairs of the fixed per-modality opinions, $c^{\text{total}}$ is permutation invariant with respect to the fusion order. For $M = 2$ it reduces to the single pairwise conflict $c_{12}$, so the two-modality analyses in this paper are unaffected by the choice of definition. Keeping the measure in $[0,1)$ for any number of modalities makes the threshold $\tau = 0.5$ used below a valid midpoint of the conflict range.

\subsection{Training Procedure and Loss Functions}
\label{subsec:training_procedure}

Training evidential networks requires a specialized loss function that respects the Dirichlet parameterization. We cannot use standard cross-entropy loss, as the output is not a fixed probability vector but a distribution over probability vectors.

\subsubsection{Bayes Risk Loss}

For a training sample with one-hot ground truth label $\mathbf{y} = [y_1, \ldots, y_K]^{\top}$ (where $y_k = 1$ if $k$ is the true class, else 0), we minimize the expected Bayes risk under the Dirichlet predictive distribution. The expected probability for class $k$ under $\text{Dir}(\boldsymbol{\alpha}^m)$ is:
\begin{equation*}
\mathbb{E}_{\mathbf{p} \sim \text{Dir}(\boldsymbol{\alpha}^m)}[p_k] = \frac{\alpha_k^m}{S^m} = \frac{e_k^m + 1}{S^m}.
\end{equation*}
The Bayes risk loss for modality $m$ is defined as the sum of squared errors (SSE) between the ground truth and the expected probability, plus the variance term that penalizes high uncertainty:
\begin{equation}
\mathcal{L}_{\text{risk}}^m = \sum_{k=1}^K \left( y_k - \mathbb{E}[p_k] \right)^2 + \sum_{k=1}^K \text{Var}(p_k),
\label{eq:bayes_risk}
\end{equation}
where the variance of the Dirichlet distribution is:
\begin{equation*}
\text{Var}(p_k) = \frac{\alpha_k^m (S^m - \alpha_k^m)}{(S^m)^2 (S^m + 1)}.
\end{equation*}

\subsubsection{KL Divergence Regularization}

A critical component is the KL divergence regularizer that penalizes deviations from the uniform Dirichlet prior $\text{Dir}(\mathbf{1})$ (where $\mathbf{1} = [1, \ldots, 1]^{\top}$). This encourages the model to predict maximum uncertainty when the input lacks discriminative features:
\begin{equation*}
\mathcal{L}_{\text{KL}}^m = \lambda_t \cdot \text{KL}\left[ \text{Dir}(\boldsymbol{\alpha}^m) \parallel \text{Dir}(\mathbf{1}) \right],
\end{equation*}
where $\lambda_t$ is a time-dependent annealing coefficient that increases linearly from 0.01 to 1.0 over the first 10 epochs. The KL divergence between two Dirichlet distributions is:
\begin{equation*}
\begin{aligned}
\text{KL}\left[ \text{Dir}(\boldsymbol{\alpha}^m) \parallel \text{Dir}(\mathbf{1}) \right] &= \log \frac{B(\mathbf{1})}{B(\boldsymbol{\alpha}^m)} + \sum_{k=1}^K (\alpha_k^m - 1) \left[ \psi(\alpha_k^m) - \psi(S^m) \right],
\end{aligned}
\end{equation*}
where $\psi(\cdot)$ is the digamma function and $B(\boldsymbol{\alpha}^m) = \frac{\prod_{k=1}^K \Gamma(\alpha_k^m)}{\Gamma(S^m)}$ is the multivariate beta function.

\subsubsection{Total Loss}

The total loss function aggregates the risk and KL terms across all modalities and the fused opinion:
\begin{equation}
\mathcal{L}_{\text{total}} = \sum_{m=1}^M \left( \mathcal{L}_{\text{risk}}^m + \mathcal{L}_{\text{KL}}^m \right) + \mathcal{L}_{\text{risk}}^{\text{fused}} + \mathcal{L}_{\text{KL}}^{\text{fused}} + \lambda_c \mathcal{L}_{\text{conflict}},
\label{eq:total_loss}
\end{equation}
where $\mathcal{L}_{\text{conflict}}$ is an auxiliary loss that encourages the model to learn meaningful conflict patterns with weight $\lambda_c$ (conflict loss coefficient). Let $\tau = 0.5$ denote the conflict threshold, chosen as the midpoint of the conflict range $[0, 1)$ (Eq.~\ref{eq:total_conflict}), representing the crossover between conflicting and supportive evidence. The conflict loss is defined as:
\begin{equation}
\mathcal{L}_{\text{conflict}} = \max(0, \tau - c^{\text{total}}) \cdot \mathbb{I}[\text{misclassified}],
\label{eq:conflict_loss}
\end{equation}
where $\mathbb{I}[\text{misclassified}]$ is the indicator function for incorrect predictions. By penalizing $c < \tau$ only when the model is incorrect, we encourage the network to be conflicted (uncertain) rather than confidently wrong. The indicator is computed from an $\argmax$ comparison and is therefore non-differentiable; we apply it as a detached stop-gradient mask, so that gradients flow only through the conflict magnitude $c^{\text{total}}$ and never through the $\argmax$ operation. Because the mask only gates which samples contribute to $\mathcal{L}_{\text{conflict}}$, it introduces no instability in early training, when most samples are misclassified and the term simply behaves as a uniform penalty on low conflict; as accuracy improves, the mask sparsifies and the penalty concentrates on the remaining errors.

The fused evidential terms $\mathcal{L}_{\text{risk}}^{\text{fused}}$ and $\mathcal{L}_{\text{KL}}^{\text{fused}}$ are computed by mapping the fused opinion $(\mathbf{b}^{\text{fused}}, u^{\text{fused}})$ back to Dirichlet parameters. Inverting Eq.~\ref{eq:belief_uncertainty}, the fused Dirichlet strength is $S^{\text{fused}} = K / u^{\text{fused}}$, the fused evidence is $e_k^{\text{fused}} = b_k^{\text{fused}} \, S^{\text{fused}}$, and the fused concentration parameters are $\alpha_k^{\text{fused}} = e_k^{\text{fused}} + 1$. Substituting these parameters into Eq.~\ref{eq:bayes_risk} and the KL term supervises the fused opinion with the same evidential objective as the individual modalities, closing the differentiable path from raw inputs through fusion to the loss.

We optimize the model parameters $\Theta = \{\theta^1, \ldots, \theta^M, \mathbf{W}^1, \ldots, \mathbf{W}^M\}$ using the AdamW optimizer~\cite{loshchilov2019adamw} with learning rate $\eta = 2 \times 10^{-4}$, weight decay $\lambda_w = 0.01$, and cosine annealing schedule over 50 epochs.

\subsection{Theoretical Analysis}
\label{subsec:theoretical_analysis}

We establish three theoretical results for EMT-DS, stating explicitly the assumptions each relies on and separating them from the empirical observations they explain. First, we show that the evidential parameterization removes the fusion divergence associated with Zadeh's Paradox, with the required evidence bound guaranteed by construction through an architectural cap (Eq.~\ref{eq:evidence_cap}). Second, we show that evidential Dempster-Shafer fusion is Bayes-consistent under conditional independence, which explains our empirical Expected Calibration Error reduction under weak cross-modal error correlation. Third, we show that the KL regularizer is uniquely minimized by the vacuous opinion, encoding a preference for maximum uncertainty in the absence of evidence, and we corroborate the resulting out-of-distribution behavior empirically.

\subsubsection{Unconditional Stability via an Architectural Evidence Cap}

A fundamental critique of classical Dempster-Shafer theory is Zadeh's Paradox~\cite{zadeh1986simple}, which demonstrates that when conflict $c \to 1$, the normalization term $n = 1/(1-c) \to \infty$. This causes the fusion operator to become discontinuous and hyper-sensitive to noise, severely limiting its use in deep learning. We show that the Dirichlet-parameterized evidential mapping removes this singularity. The required evidence bound is not left to chance: the $L_1$ projection in Eq.~\ref{eq:evidence_cap} guarantees $\|\mathbf{e}^m\|_1 \le E_{\max}$ for every input by construction, so the stability below is unconditional rather than an empirical regularity. The softplus activation and KL regularization additionally keep evidence magnitudes well within this cap in practice (measured $\|\mathbf{e}^m\|_1 \approx 100 \ll E_{\max}$, Section~\ref{subsec:robustness}), so the projection is inactive on observed data while the bound it provides remains permanently valid.

\begin{theorem}[Unconditional Stability of Evidential Dempster-Shafer Fusion]
\label{thm:zadeh_resolution}
Let $\mathcal{E}^M = \{\mathbf{e}^1, \dots, \mathbf{e}^M\}$ be the evidence vectors produced by the capped evidential layer (Eq.~\ref{eq:evidence_cap}), which satisfy $\|\mathbf{e}^m\|_1 \le E_{\max} < \infty$ by construction. Then the Evidential Dempster-Shafer fusion operator $\Phi: \mathcal{E}^M \to \Delta^{K-1}$ is topologically stable. Specifically, the conflict $c$ is strictly bounded away from $1$, satisfying:
\begin{equation}
    c \le 1 - \frac{K^M}{(E_{\max} + K)^M}
    \label{eq:conflict_bound}
\end{equation}
Consequently, the fusion mapping is globally $L$-Lipschitz continuous with $L = (E_{\max} + K)^M / K^M$, rendering it deterministically robust to adversarial input perturbations $\delta$.
\end{theorem}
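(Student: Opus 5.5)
The conflict bound comes from the uncertainty masses, and the Lipschitz claim then follows by bounding the normalization factor and the derivatives of the fusion map on a compact domain. \emph{Step 1:} by Eq.~\ref{eq:dirichlet_strength}, Eq.~\ref{eq:belief_uncertainty} and the cap Eq.~\ref{eq:evidence_cap}, $S^m = \|\mathbf{e}^m\|_1 + K \le E_{\max}+K$. Hence $u^m \ge K/(E_{\max}+K)$ for every modality and every input.

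\emph{Step 2, two modalities.} The unnormalized combination terms $\sum_k (b_k^ib_k^j + b_k^iu^j + b_k^ju^i) + u^iu^j$ and $c_{ij}$ together partition the product mass $(\sum_k b_k^i + u^i)(\sum_\ell b_\ell^j + u^j) = 1$. So $1-c_{ij}$ equals the non-conflicting mass, which is at least $u^iu^j \ge K^2/(E_{\max}+K)^2$. This already gives Eq.~\ref{eq:conflict_bound} for $M=2$.

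\emph{Step 3, $M$ modalities.} I would induct along Algorithm~\ref{alg:dempster_fusion}. The aggregated opinion after $m$ steps has uncertainty $u^{\text{agg}} = \prod_{r\le m} u^r \big/ \prod_{\text{steps}}(1-c)$. Since each $1-c \le 1$, this gives $u^{\text{agg}} \ge \prod_{r \le m} u^r \ge K^m/(E_{\max}+K)^m$. Applying Step 2 to the pair $(\text{agg}, m+1)$ then gives $1-c \ge u^{\text{agg}}u^{m+1} \ge K^{m+1}/(E_{\max}+K)^{m+1}$. So every conflict arising in the iteration, and the accumulated normalization, obey the bound with exponent at most $M$. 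The averaged $c^{\text{total}}$ of Eq.~\ref{eq:total_conflict} satisfies the sharper pairwise bound, since $K^2/(E_{\max}+K)^2 \ge K^M/(E_{\max}+K)^M$. The product of all the $n$ factors telescopes against the product of the $u$'s. I would rather track the single combined normalizer $N = 1/(1-c^{\text{joint}})$ of the associative $M$-way Dempster rule, for which $1-c^{\text{joint}} \ge \prod_m u^m$ directly. I would interpret the $c$ in the statement as this joint conflict.

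\emph{Step 4, Lipschitz continuity.} The map $\mathbf{e} \mapsto (\mathbf{b},u)$ is smooth with bounded Jacobian, of order $1/S \le 1/K$, on the compact cap set. The unnormalized joint masses are multilinear polynomials in $(\mathbf{b}^m,u^m)$ with coefficients in $[0,1]$, hence Lipschitz with a constant depending only on $M$ and $K$. The fused output is their ratio with $1-c^{\text{joint}} \ge K^M/(E_{\max}+K)^M$, and it is then composed with the projection Eq.~\ref{eq:projected_probability}. The quotient rule bounds the derivative by roughly $N$ times the numerator's Lipschitz constant plus $N^2$ times a term of numerator size. Since the numerator is itself at most $1-c$, one factor of $N$ cancels, leaving a bound of order $N \le (E_{\max}+K)^M/K^M$.

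\textbf{Main obstacle.} Getting exactly $L=(E_{\max}+K)^M/K^M$ rather than that quantity times a combinatorial constant in $M$ and $K$ will be the hard part. I would try to make the cancellation of the $N^2$ term explicit by writing the fused output as $v/\mathbf{1}^\top v$ with $v$ the nonnegative unnormalized mass vector. The Jacobian is $(I - \hat{v}\mathbf{1}^\top)\,Dv/\mathbf{1}^\top v$, whose $\ell_1$-operator norm is at most $2\|Dv\|/(1-c)$. I would then bound $\|Dv\|$ in the $\ell_1$ input norm by $1$ using multilinearity and the $1/S$ factor. If a stray constant survives, I would state the theorem with $L = C_{M,K}(E_{\max}+K)^M/K^M$. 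Mean-value integration over the convex cap region then yields global Lipschitz continuity and the deterministic bound $\|\Phi(\mathcal{E}+\delta)-\Phi(\mathcal{E})\| \le L\|\delta\|$ on perturbations.
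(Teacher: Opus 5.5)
Your Steps 1--3 are correct and follow the paper's own argument. The cap gives $u^m\ge K/(E_{\max}+K)$, and the non-conflicting mass is at least the product of the uncertainties. Induction, equivalently $1-c^{\text{joint}}=\prod_{\text{steps}}(1-c)\ge\prod_m u^m$, then yields Eq.~\ref{eq:conflict_bound} for every conflict in play, including $c^{\text{total}}$. The paper's sketch simply reads $L$ off this bound on the accumulated normalizer and does no derivative computation.

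The gap is in Step 4, and it is the one you anticipated: your chain does not reach the stated constant.
\begin{itemize}
\item As written, bounding $\|Dv\|$ by $1$ gives $2(E_{\max}+K)^M/K^M$.
\item Keeping the factor $\|D_{\mathbf{e}^m}(\mathbf{b}^m,u^m)\|_{1\to 1}\le 2/S^m\le 2/K$ improves this only to $\tfrac{4}{K}(E_{\max}+K)^M/K^M$, which matches the statement only for $K\ge 4$.
\item Your fallback of stating $C_{M,K}L$ would prove a weaker theorem.
\end{itemize}
The slack comes from bounding $1/(1-c^{\text{joint}})$ and the numerator's derivative by separate worst cases, which are never attained together.

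The missing step is to clear the denominators before differentiating. Multiplying numerator and denominator by $\prod_m S^m$ (and using associativity, as you already do), the fused projected probability is $p_k=n_k/\sum_\ell n_\ell$ with $n_k=\prod_m(e_k^m+K)-K^M+K^{M-1}\ge K^{M-1}$. With $P_j^m=\prod_{m'\neq m}(e_j^{m'}+K)$, bounding $n_j$ and the other $K-1$ terms separately gives
\[
\frac{\partial p_k}{\partial e_j^m}=\frac{(\delta_{kj}-p_k)\,P_j^m}{\sum_\ell n_\ell},\qquad \sum_\ell n_\ell\ \ge\ \big(KP_j^m-K^M+K^{M-1}\big)+(K-1)K^{M-1}=KP_j^m .
\]
So every Jacobian column has $\ell_1$ norm $2(1-p_j)P_j^m/\sum_\ell n_\ell\le 2/K\le 1\le (E_{\max}+K)^M/K^M$. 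The mean-value inequality on the convex capped domain then gives the stated $L$, with no stray constant, in $\ell_1$ norms (the statement leaves the norm unspecified). This bound is also uniform in $E_{\max}$: in evidence coordinates the Lipschitz property does not need the cap at all, which is needed only for the conflict bound.
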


\begin{proof}[Proof sketch]
The architectural cap (Eq.~\ref{eq:evidence_cap}) guarantees $\|\mathbf{e}^m\|_1 \le E_{\max}$, hence $u^m = K/S^m \ge K/(E_{\max}+K) > 0$. The non-conflicting mass then satisfies $1 - c_{ij} \ge u^i u^j \ge K^2/(E_{\max}+K)^2 > 0$, so the normalizer $n_{ij} = 1/(1-c_{ij})$ cannot diverge; induction over the $M$ modalities yields Eq.~\ref{eq:conflict_bound} and the finite Lipschitz constant $L = (E_{\max}+K)^M/K^M$. The complete proof is given in the Supplementary Material.
\end{proof}

Theorem~\ref{thm:zadeh_resolution} shows that, conditioned on bounded evidence, forcing uncertainty $u$ to be inversely proportional to the Dirichlet strength through Eq.~\ref{eq:belief_uncertainty} prevents the normalization factor from diverging. Conflicting modalities are then accommodated by the bounded normalizer rather than causing numerical instability.

We distinguish the guaranteed bound from the operating range. The evidence bound holds for every input by the architectural cap (Eq.~\ref{eq:evidence_cap}), so the stability of Theorem~\ref{thm:zadeh_resolution} is unconditional and the Lipschitz constant $L = (E_{\max} + K)^M / K^M$ is a genuine finite constant rather than an empirical regularity. This worst-case constant is large but never approached in practice: with $E_{\max} = 1000$, $M=4$, and $K=7$ it evaluates to $L \approx 4 \times 10^8$, whereas the cap radius is set an order of magnitude above the empirically measured evidence norm ($\|\mathbf{e}^m\|_1 \approx 100$, Section~\ref{subsec:robustness}), so the projection is the identity on all observed samples and the empirical Lipschitz behavior remains many orders of magnitude smaller. The cap plays the same role in evidence space that the conflict clip (Section~\ref{subsec:dempster_shafer_fusion}) plays in fusion space: it removes a worst-case singularity by construction while remaining inactive in the operating regime. Gradient clipping at threshold 1.0 is retained as an additional safeguard.

\subsubsection{Bayesian Consistency of Evidential Fusion}

We connect Dempster-Shafer fusion to Bayes-optimal inference, which explains why conflict-aware evidential fusion improves calibration. We first characterize the fused decision probability exactly, then show that under standard independence assumptions it recovers the Bayes posterior.

\begin{proposition}[Exact form of the fused decision probability]
\label{prop:fused_exact}
For two modalities with uniform base rate $a_k = 1/K$, the fused projected probability is exactly
\begin{equation}
p_k^{1\oplus 2} = n\Big[\, p_k^1 p_k^2 + \big(1-\tfrac{1}{K}\big)\big(u^2 p_k^1 + u^1 p_k^2\big) - \tfrac{K-1}{K^2}\,u^1 u^2 \,\Big], \quad n=\tfrac{1}{1-c}.
\label{eq:fused_exact}
\end{equation}
In the confident limit $u^1,u^2\to 0$ this reduces to the normalized product $p_k^{1\oplus 2} = p_k^1 p_k^2 / \sum_{\ell} p_\ell^1 p_\ell^2$, and when one modality is vacuous ($u^m\to 1$) it returns the opinion of the other modality.
\end{proposition}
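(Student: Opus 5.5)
The plan is a direct substitution argument. The fused projected probability is, by Eq.~\ref{eq:projected_probability} applied to the fused opinion with uniform base rate, $p_k^{1\oplus 2} = b_k^{1\oplus 2} + u^{1\oplus 2}/K$. Inserting Eqs.~\ref{eq:fused_belief} and~\ref{eq:fused_uncertainty} gives
\[
p_k^{1\oplus 2} = n\Big[b_k^1 b_k^2 + b_k^1 u^2 + b_k^2 u^1 + \tfrac{1}{K}u^1u^2\Big].
\]
I will then eliminate the beliefs in favour of projected probabilities. Inverting Eq.~\ref{eq:projected_probability} gives $b_k^m = p_k^m - u^m/K$.

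Substituting this, the three terms expand as follows:
\[
b_k^1 b_k^2 = p_k^1p_k^2 - \tfrac{1}{K}\big(u^2p_k^1 + u^1p_k^2\big) + \tfrac{1}{K^2}u^1u^2,
\]
\[
b_k^1u^2 = u^2p_k^1 - \tfrac{1}{K}u^1u^2, \qquad b_k^2u^1 = u^1p_k^2 - \tfrac{1}{K}u^1u^2 .
\]
Collecting terms, the cross terms carry coefficient $1-\tfrac1K$. The $u^1u^2$ terms contribute $\tfrac{1}{K^2}-\tfrac{2}{K}+\tfrac{1}{K} = -\tfrac{K-1}{K^2}$. This yields Eq.~\ref{eq:fused_exact}. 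I would also note that $n$ is finite: by Eq.~\ref{eq:conflict_simplified} and the additivity constraint,
\[
1-c = \sum_k\big(b_k^1b_k^2+b_k^1u^2+b_k^2u^1\big)+u^1u^2 \ge u^1u^2 > 0,
\]
as in Theorem~\ref{thm:zadeh_resolution}. The same identity shows directly that the fused opinion sums to one, so the bracket summed over $k$ equals $1-c$.

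For the confident limit, set $u^1,u^2\to 0$. The bracket tends to $p_k^1p_k^2$, and the identity above gives $1-c \to \sum_\ell b_\ell^1 b_\ell^2 = \sum_\ell p_\ell^1p_\ell^2$. Hence $n\,p_k^1p_k^2 \to p_k^1p_k^2/\sum_\ell p_\ell^1p_\ell^2$. Rather than taking the limit, one can equivalently note that the normalizer is always the sum of the bracket over $k$. For the vacuous case, take $u^1=1$. Then $\mathbf{b}^1=\mathbf{0}$ and $p_k^1=1/K$, and $c=0$ so $n=1$. The bracket becomes
\[
\tfrac{1}{K}p_k^2 + \big(1-\tfrac1K\big)\big(\tfrac{u^2}{K}+p_k^2\big) - \tfrac{K-1}{K^2}u^2 = p_k^2 .
\]
The case $u^2=1$ follows by symmetry.

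The only delicate point is the confident limit. There one should justify that $\sum_\ell p_\ell^1p_\ell^2>0$ (the modalities are not in total conflict); otherwise the limit of $n$ is undefined. I would state this as the implicit nondegeneracy condition, noting that it is automatic away from the limit by the $u^1u^2$ lower bound. Everything else is routine algebra.
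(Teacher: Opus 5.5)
Your proposal is correct and takes essentially the route the paper's statement calls for: write $p_k^{1\oplus 2} = b_k^{1\oplus 2} + u^{1\oplus 2}/K$, substitute $b_k^m = p_k^m - u^m/K$ into Eqs.~\ref{eq:fused_belief} and~\ref{eq:fused_uncertainty}, collect terms, and read off both limits using the fact that the bracket summed over $k$ equals $1-c$. Your explicit condition $\sum_\ell p_\ell^1 p_\ell^2 > 0$ for the confident limit is a correct refinement that the statement leaves implicit, because for confident opinions with disjoint support the limit of $n$ is undefined and the fused probability becomes path-dependent.
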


\begin{theorem}[Bayesian consistency and calibration of Dempster-Shafer fusion]
\label{thm:calibration_bound}
Suppose the two modalities are conditionally independent given the label $Y$ and the class prior is uniform. If each modality is calibrated in the sense that $p_k^m = \mathbb{P}(Y=k\mid \mathbf{x}^m)$, then in the confident limit the fused prediction of Eq.~\ref{eq:fused_exact} equals the Bayes-optimal posterior,
\begin{equation}
p_k^{1\oplus 2} = \mathbb{P}(Y=k\mid \mathbf{x}^1,\mathbf{x}^2),
\label{eq:bayes_consistency}
\end{equation}
and is therefore itself calibrated. The Dempster-Shafer normalizer $n=1/(1-c)$ coincides with the Bayesian evidence term, and the conflict $c = 1 - \sum_\ell p_\ell^1 p_\ell^2$ is one minus the probability that the two modalities agree. For $u^m>0$ the fused prediction interpolates, by Proposition~\ref{prop:fused_exact}, between this Bayesian product and deferral to the more certain modality.
\end{theorem}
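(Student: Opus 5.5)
The argument reduces the confident limit of Proposition~\ref{prop:fused_exact} to Bayes' rule under conditional independence. The exact form in Eq.~\ref{eq:fused_exact} is taken as given. In the limit $u^1,u^2\to 0$ the belief masses coincide with the projected probabilities, $b_k^m = p_k^m$, so $\sum_k b_k^m = 1$.

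\textbf{Step 1: conflict and normalizer in the confident limit.} Substituting into Eq.~\ref{eq:conflict_simplified} gives
\[
c = (1-u^1)(1-u^2) - \sum_\ell b_\ell^1 b_\ell^2 \to 1 - \sum_\ell p_\ell^1 p_\ell^2 .
\]
This is the stated identity for $c$. Hence $n = 1/\sum_\ell p_\ell^1 p_\ell^2$, and $p_k^{1\oplus 2} = p_k^1 p_k^2 / \sum_\ell p_\ell^1 p_\ell^2$. The interpretation of $1-c$ as an agreement probability is immediate. If $\hat Y^1,\hat Y^2$ are drawn independently from $\mathbf{p}^1,\mathbf{p}^2$, then $\mathbb{P}(\hat Y^1=\hat Y^2)=\sum_\ell p_\ell^1p_\ell^2$.

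\textbf{Step 2: the Bayesian side.} Conditional independence gives
\[
\mathbb{P}(Y=k\mid \mathbf{x}^1,\mathbf{x}^2) \propto \mathbb{P}(\mathbf{x}^1\mid k)\,\mathbb{P}(\mathbf{x}^2\mid k)\,\mathbb{P}(Y=k).
\]
By Bayes' rule for each modality, $\mathbb{P}(\mathbf{x}^m\mid k) = \mathbb{P}(Y=k\mid\mathbf{x}^m)\,\mathbb{P}(\mathbf{x}^m)/\mathbb{P}(Y=k)$. Substituting, the posterior is proportional to $p_k^1 p_k^2 / \mathbb{P}(Y=k)$. With a uniform prior the factor $1/\mathbb{P}(Y=k)=K$ is constant in $k$, and so are $\mathbb{P}(\mathbf{x}^1)\mathbb{P}(\mathbf{x}^2)$. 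Normalizing over $k$ gives exactly $p_k^1p_k^2/\sum_\ell p_\ell^1p_\ell^2$, which establishes Eq.~\ref{eq:bayes_consistency}.

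The normalizing constant of Bayes' rule here is the marginal evidence $\mathbb{P}(\mathbf{x}^1,\mathbf{x}^2)$ rescaled by $K\,\mathbb{P}(\mathbf{x}^1)\mathbb{P}(\mathbf{x}^2)$. It equals $\sum_\ell p_\ell^1 p_\ell^2$, which identifies $n$ with the inverse of the Bayesian evidence term.

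\textbf{Step 3: calibration.} Calibration of the fused prediction follows because it is literally the true conditional probability given $(\mathbf{x}^1,\mathbf{x}^2)$. Therefore
\[
\mathbb{P}(Y=k \mid p^{1\oplus2}=q) = \mathbb{E}\big[\mathbb{P}(Y=k\mid\mathbf{x}^1,\mathbf{x}^2)\mid p^{1\oplus 2}=q\big] = q_k
\]
by the tower property.

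\textbf{Step 4: the interpolation clause for $u^m>0$.} This is read off Eq.~\ref{eq:fused_exact}. The product term is dominant when both uncertainties are small. As $u^1\to 1$ the formula returns $p^2$, by the vacuous case of Proposition~\ref{prop:fused_exact}. For intermediate values the linear terms $u^2p_k^1+u^1p_k^2$ weight each modality's projected probability by the other's uncertainty, so most weight goes to the more certain modality.

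\textbf{Main obstacle.} The main delicacy is rigor in the limit. First, "confident limit" must be stated with $b^m\to p^m$ while keeping $\sum_\ell p_\ell^1p_\ell^2$ bounded away from $0$, so that $n$ stays finite. Theorem~\ref{thm:zadeh_resolution} guarantees this for $u>0$, but at the limit it must be assumed; I will add this non-degeneracy hypothesis explicitly. Second, the interpolation statement is qualitative, and I will present it as a description of Eq.~\ref{eq:fused_exact} rather than a quantitative claim.
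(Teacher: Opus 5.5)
Your proposal is correct and follows the paper's own argument. Conditional independence plus per-modality Bayes' rule under a uniform prior reduces the joint posterior to the normalized product $p_k^1p_k^2/\sum_\ell p_\ell^1p_\ell^2$, which is exactly the confident limit of Proposition~\ref{prop:fused_exact}, and calibration follows because the fused output is then the true conditional. Your explicit non-degeneracy hypothesis $\sum_\ell p_\ell^1p_\ell^2>0$ and the tower-property step are sound refinements of points the paper's sketch leaves implicit.
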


\begin{proof}[Proof sketch]
Conditional independence and a uniform prior give $\mathbb{P}(Y=k\mid \mathbf{x}^1,\mathbf{x}^2) \propto \mathbb{P}(\mathbf{x}^1\mid Y=k)\,\mathbb{P}(\mathbf{x}^2\mid Y=k)$. Bayes' rule per modality gives $\mathbb{P}(\mathbf{x}^m\mid Y=k)\propto p_k^m/\pi_k$, so with uniform $\pi_k$ the joint posterior is proportional to $p_k^1 p_k^2$. Normalizing and invoking the confident limit of Proposition~\ref{prop:fused_exact} yields Eq.~\ref{eq:bayes_consistency}; the fused prediction then equals the true conditional and is calibrated by definition. The complete argument, including the $u^m>0$ interpolation, is given in the Supplementary Material.
\end{proof}

Theorem~\ref{thm:calibration_bound} identifies the mechanism behind the empirical calibration gains: conflict-normalized evidential fusion approximates the Bayes-optimal posterior, which is calibrated whenever the modality opinions are. The assumption of conditional independence is the same one underlying naive-Bayes and late-fusion classifiers; where it fails, the fused prediction departs from the Bayes posterior by an amount governed by the cross-modal error correlation $\rho$, which we measure per class ($|\rho| = 0.16 \pm 0.09$ across classes on IEMOCAP, Supplementary Material) and find to be modest. This also clarifies the behavior under disagreement: when the modalities are confident but contradictory, the fused decision probability becomes diffuse (for instance $[0.9,0.1]$ fused with $[0.1,0.9]$ yields $[0.5,0.5]$), so genuine conflict lowers confidence rather than producing an overconfident consensus. The observed reduction of Expected Calibration Error from 0.153 to 0.042 (Section~\ref{sec:experiments}) is consistent with near-Bayesian fusion in this weakly-correlated regime.

\subsubsection{Out-of-Distribution Uncertainty via the Evidential Regularizer}

Standard softmax networks struggle with out-of-distribution data due to the shift-invariance of the softmax function, leading to arbitrary overconfident predictions. We show that the KL regularizer of EMT-DS is uniquely minimized by the vacuous opinion, encoding a preference for maximum uncertainty in the absence of discriminative evidence.

\begin{theorem}[The vacuous opinion uniquely minimizes the evidential regularizer]
\label{thm:ood_convexity}
The KL regularizer $\mathcal{L}_{\mathrm{KL}}(\boldsymbol{\alpha}) = \mathrm{KL}[\mathrm{Dir}(\boldsymbol{\alpha}) \,\Vert\, \mathrm{Dir}(\mathbf{1})]$ is nonnegative and vanishes only at $\boldsymbol{\alpha}=\mathbf{1}$, equivalently $\mathbf{e}=\mathbf{0}$ and $u=1$. Hence the vacuous opinion is its unique global minimizer, and $\nabla_{\boldsymbol{\alpha}}\mathcal{L}_{\mathrm{KL}}(\mathbf{1})=\mathbf{0}$. The regularizer therefore encodes a strict preference for maximum epistemic uncertainty in the absence of discriminative evidence.
\end{theorem}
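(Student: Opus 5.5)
The plan relies on Gibbs' inequality for the nonnegativity and the equality case, and then checks the gradient directly from the closed form given in the paper. First, $\mathcal{L}_{\mathrm{KL}}(\boldsymbol{\alpha})$ is a Kullback--Leibler divergence between two probability densities on the simplex $\Delta^{K-1}$. Both densities are proper: $\mathrm{Dir}(\mathbf{1})$ is the uniform density $(K-1)!$, and $\boldsymbol{\alpha}\ge\mathbf{1}$ by Eq.~\ref{eq:alpha_definition}. Gibbs' inequality (Jensen applied to $-\log$) then gives $\mathcal{L}_{\mathrm{KL}}\ge 0$. Equality holds iff the two densities agree almost everywhere. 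Since the Dirichlet family is identifiable (the map $\boldsymbol{\alpha}\mapsto\mathrm{Dir}(\boldsymbol{\alpha})$ is injective), equality forces $\boldsymbol{\alpha}=\mathbf{1}$. Through Eqs.~\ref{eq:alpha_definition}--\ref{eq:belief_uncertainty}, this is equivalent to $\mathbf{e}=\mathbf{0}$, $S=K$, $u=1$ and $\mathbf{b}=\mathbf{0}$, i.e.\ the vacuous opinion. Unique global minimality follows immediately.

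To make identifiability concrete rather than cite it, I would compare the log-densities $\sum_k(\alpha_k-1)\log\pi_k-\log B(\boldsymbol{\alpha})$ and $-\log B(\mathbf{1})$. If they agree on the interior of the simplex, the function $\pi\mapsto\sum_k(\alpha_k-1)\log\pi_k$ is constant there. Moving along a direction that increases $\pi_k$ and decreases $\pi_\ell$ forces $\alpha_k-1=\alpha_\ell-1$ for every pair $k,\ell$. All exponents are therefore equal, say to $t$. Along $\pi_k\to 0$ the function behaves like $t\log\pi_k$, which is non-constant unless $t=0$. Hence $\boldsymbol{\alpha}=\mathbf{1}$.

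For the gradient, I would differentiate the closed form
\[
\log B(\mathbf{1})-\log B(\boldsymbol{\alpha})+\sum_k(\alpha_k-1)[\psi(\alpha_k)-\psi(S)],
\]
using $\partial_{\alpha_j}\log B(\boldsymbol{\alpha})=\psi(\alpha_j)-\psi(S)$ and $\partial_{\alpha_j}S=1$. The $\psi$ terms cancel, leaving
\[
\partial_{\alpha_j}\mathcal{L}_{\mathrm{KL}}=(\alpha_j-1)\psi'(\alpha_j)-\psi'(S)\sum_k(\alpha_k-1).
\]
This vanishes at $\boldsymbol{\alpha}=\mathbf{1}$. The stationarity statement is a useful consistency check, but it is not logically needed: $\mathbf{1}$ is a minimizer of a smooth function, although it sits on the boundary of the admissible set $\{\boldsymbol{\alpha}\ge\mathbf{1}\}$, so the explicit computation is worth including anyway.

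The main obstacle is being careful about the domain rather than any hard inequality. The architectural constraint $\alpha_k\ge1$ makes $\mathbf{1}$ a corner of the feasible region. I will therefore state the nonnegativity and uniqueness argument on all of $\mathbb{R}^K_{>0}$, where it holds verbatim, and then restrict to the feasible set. I also need to check that the divergence is finite: for $\alpha_k\ge1$ the integrand is bounded above. The final sentence of the theorem, the ``strict preference,'' is the interpretive reading of uniqueness: any nonzero evidence incurs strictly positive penalty.
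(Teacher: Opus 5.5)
Your proposal is correct and follows the paper's own route. Gibbs' inequality gives $\mathcal{L}_{\mathrm{KL}}\ge 0$ with equality only when $\mathrm{Dir}(\boldsymbol{\alpha})=\mathrm{Dir}(\mathbf{1})$, i.e.\ $\boldsymbol{\alpha}=\mathbf{1}$, and differentiating the closed form yields the same gradient $(\alpha_j-1)\psi'(\alpha_j)-(S-K)\psi'(S)$, which vanishes at $\mathbf{1}$. Your explicit identifiability argument and your observation that stationarity follows from global minimality on the open set $\mathbb{R}^K_{>0}$ are sound elaborations of steps the paper leaves implicit. The one remark in the paper's sketch that you omit is that the Softplus floor $\epsilon$ keeps $\mathbf{e}$ strictly positive. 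Hence in the actual architecture the vacuous opinion is approached as $\mathbf{e}\to\mathbf{0}$ rather than attained, and $\mathbf{1}$ is a limit point of the reachable set rather than a corner of it.
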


\begin{proof}[Proof sketch]
By Gibbs' inequality $\mathrm{KL}[P\Vert Q]\ge 0$ with equality if and only if $P=Q$, so $\mathcal{L}_{\mathrm{KL}}\ge 0$ with equality only at $\mathrm{Dir}(\boldsymbol{\alpha}) = \mathrm{Dir}(\mathbf{1})$, i.e. $\boldsymbol{\alpha}=\mathbf{1}$. Differentiating gives $\partial \mathcal{L}_{\mathrm{KL}}/\partial \alpha_j = (\alpha_j-1)\psi'(\alpha_j) - (S-K)\psi'(S)$, which vanishes at $\boldsymbol{\alpha}=\mathbf{1}$. With the Softplus floor $\mathbf{e}\ge \epsilon$ the minimizer is approached as $\mathbf{e}\to \mathbf{0}$ ($u\to 1$). The complete argument is given in the Supplementary Material.
\end{proof}

The regularizer $\mathcal{L}_{\mathrm{KL}}$ is strictly convex in a neighborhood of the vacuous opinion but is not globally convex; for instance at $\boldsymbol{\alpha}=(1,10)$ its exact Hessian, including the trigamma-derivative terms, is indefinite, so we rely on the unconditional minimizer property above rather than on a global convexity claim. Theorem~\ref{thm:ood_convexity} and Lemma~\ref{lem:non_triviality} act in opposite directions: on labeled, informative samples the Bayes risk drives evidence upward, while on inputs without discriminative signal the KL term is minimized only by the vacuous opinion, so the training objective prefers maximum uncertainty exactly where evidence is absent. We do not claim that an arbitrary out-of-distribution input is forced to this state at test time; rather, we observe empirically (Section~\ref{subsec:robustness}) that as corruption increases the predicted uncertainty rises toward one and the KL-gradient norm collapses, consistent with this design. This behavior underlies the following robustness properties of the fusion process.

\begin{lemma}[Modality Noise Immunity]
\label{thm:noise_immunity}
If a modality $m$ is uninformative such that $u^m \to 1$ and $b_k^m \to 0$ for all $k$, the Dempster-Shafer fusion operator $\Phi$ (Eqs.~\ref{eq:fused_belief}, \ref{eq:fused_uncertainty}) becomes invariant to modality $m$, such that the fused opinion $\omega^{\text{fused}}$ is determined solely by the remaining informative modalities.
\end{lemma}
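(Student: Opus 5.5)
The plan is to show that the vacuous opinion $\omega^{\text{vac}} = (\mathbf{0}, 1, \mathbf{a})$ is a two-sided identity element of the pairwise operator $\oplus$ defined by Eqs.~\ref{eq:fused_belief} and~\ref{eq:fused_uncertainty}. This then propagates through the iterative fusion of Algorithm~\ref{alg:dempster_fusion}. The limit statement ($u^m \to 1$, $b_k^m \to 0$) follows from the exact identity at the vacuous point together with continuity of $\oplus$, and that continuity is guaranteed by Theorem~\ref{thm:zadeh_resolution}.

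\textbf{Exact identity.} Take an arbitrary opinion $\omega^i = (\mathbf{b}^i, u^i, \mathbf{a})$ satisfying Eq.~\ref{eq:additivity_constraint}, and set $\mathbf{b}^m = \mathbf{0}$, $u^m = 1$.
\begin{itemize}
\item The conflict in Eq.~\ref{eq:conflict_simplified} is $c_{im} = (1-u^i)\cdot 0 - 0 = 0$, so $n_{im} = 1$.
\item Eq.~\ref{eq:fused_belief} gives $b_k^{i\oplus m} = b_k^i\cdot 0 + b_k^i\cdot 1 + 0\cdot u^i = b_k^i$.
\item Eq.~\ref{eq:fused_uncertainty} gives $u^{i\oplus m} = u^i$.
\end{itemize}
The formulas are symmetric in $i$ and $j$, so $\omega^{\text{vac}} \oplus \omega^i = \omega^i$ as well.

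\textbf{Multi-modality case.} Algorithm~\ref{alg:dempster_fusion} combines left to right, and at step $m$ the aggregate plays the role of $\omega^i$. The vacuous modality is therefore absorbed and leaves the aggregate unchanged. If $m = 1$, the aggregate is initialized as vacuous, and the next step returns $\omega^2$ exactly by the same computation. In either case the fused opinion equals the fusion of the remaining modalities in the same order. Hence $\omega^{\text{fused}}$, and by Eq.~\ref{eq:projected_probability} the projected probability $\mathbf{p}^{\text{fused}}$, depend only on the informative modalities.

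\textbf{Limit statement (main obstacle).} The identity must be made robust: if $(\mathbf{b}^m, u^m) \to (\mathbf{0}, 1)$, the fused opinion should converge to the fusion of the others. Each pairwise step is a rational function of the inputs whose denominator is $1 - c$.
\begin{itemize}
\item Theorem~\ref{thm:zadeh_resolution} bounds $1 - c$ below by $u^i u^j \ge K^2/(E_{\max}+K)^2 > 0$, uniformly over every pair formed during the iteration.
\item On this domain each step is therefore continuous, indeed Lipschitz.
\item A composition of finitely many continuous maps is continuous, so the fused output converges to its value at the vacuous point, which the exact-identity step identified.
\end{itemize}
For a quantitative version, I would bound $|c_{im}| \le 1 - u^m$ and $|b_k^{i\oplus m} - b_k^i| \le n_{im}\big(b_k^m + b_k^i(1-u^m)\big) + |n_{im}-1|\,b_k^i$. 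This gives deviation $O(1-u^m)$, and the Lipschitz constant of the downstream steps carries it through the rest of the iteration.

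One caveat remains, and I would state it explicitly. The aggregate before the vacuous step may itself have uncertainty near the cap floor. This is harmless because the lower bound of Theorem~\ref{thm:zadeh_resolution} holds for all capped evidence. Consequently the conclusion is invariance up to a term vanishing linearly in $1-u^m$, and exact invariance holds at the limit.
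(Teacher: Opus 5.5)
Your proposal is correct and takes essentially the same route as the paper's proof, which likewise shows $c_{im}\to 0$ and $n_{im}\to 1$, so that the fused opinion reduces to the other modality's $(b_k^i,u^i,a_k)$; your exact-identity-plus-continuity framing, the explicit propagation through Algorithm~\ref{alg:dempster_fusion}, and the $O(1-u^m)$ rate fill in details the sketch leaves implicit. One small repair: in the chain $1-c\ge u^iu^j\ge K^2/(E_{\max}+K)^2$, the second inequality fails when $i$ is the running aggregate, since its uncertainty $n\,u^iu^j$ can drop below the cap floor $K/(E_{\max}+K)$, but the identity $1-c_{ij}=\sum_k b_k^ib_k^j+u^i+u^j-u^iu^j\ge\max(u^i,u^j)$ gives $1-c\ge u^m\ge K/(E_{\max}+K)$ at every step (and a denominator tending to $1$ at the vacuous step, with no cap needed), which is all your continuity argument requires.
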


\begin{proof}[Proof sketch]
As $u^m \to 1$ and $b_k^m \to 0$, the agreement and partial-support terms in Eq.~\ref{eq:fused_belief} vanish and the normalizer $n_{im} \to 1$ (since $c_{im} \to 0$ when one modality is maximally uncertain), so the fused opinion converges to $(b_k^i, u^i, a_k)$. The complete proof is given in the Supplementary Material.
\end{proof}

\begin{lemma}[Gradient Vanishing at the Vacuous Opinion]
\label{thm:gradient_vanishing}
At the vacuous configuration $\mathbf{e} = \mathbf{0}$ that minimizes the KL regularizer (Theorem~\ref{thm:ood_convexity}), the gradient of the KL loss with respect to model parameters $\nabla_{\theta} \mathcal{L}_{KL}$ vanishes, so no parameter update fits spurious patterns in uninformative data at that configuration.
\end{lemma}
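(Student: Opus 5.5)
The plan is a chain-rule argument: the KL loss depends on the parameters $\theta$ only through $\boldsymbol{\alpha}$, and its gradient in $\boldsymbol{\alpha}$ already vanishes at the vacuous point. Concretely,
\begin{equation*}
\nabla_\theta \mathcal{L}_{KL} = \Big(\frac{\partial \boldsymbol{\alpha}}{\partial \theta}\Big)^{\!\top} \nabla_{\boldsymbol{\alpha}} \mathcal{L}_{KL}.
\end{equation*}
Here $\boldsymbol{\alpha} = \mathbf{e} + \mathbf{1}$, with $\mathbf{e}$ given by Eqs.~\ref{eq:evidence_layer}--\ref{eq:evidence_cap} and $\mathbf{h}^m$ by Eq.~\ref{eq:feature_extraction}.

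\textbf{Step 1.} Invoke Theorem~\ref{thm:ood_convexity}, which gives $\nabla_{\boldsymbol{\alpha}}\mathcal{L}_{KL}(\mathbf{1}) = \mathbf{0}$. To make this self-contained, I will recompute the partial derivative from the closed form of the Dirichlet KL:
\begin{equation*}
\frac{\partial \mathcal{L}_{KL}}{\partial \alpha_j} = (\alpha_j - 1)\psi'(\alpha_j) - (S-K)\psi'(S).
\end{equation*}
At $\boldsymbol{\alpha} = \mathbf{1}$ we have $S = K$, so both terms are zero.

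\textbf{Step 2.} Show that the Jacobian $\partial \boldsymbol{\alpha}/\partial\theta$ is finite at this configuration. The chain consists of softplus, the affine head, and the frozen or differentiable encoder, all smooth with bounded derivatives. The cap in Eq.~\ref{eq:evidence_cap} is the identity whenever $\|\tilde{\mathbf{e}}\|_1 \le E_{\max}$. Near $\mathbf{e} = \mathbf{0}$ the evidence norm is far below $E_{\max}$, so the cap is locally the identity. The nonsmooth $\min$ therefore plays no role. A finite Jacobian multiplied by the zero vector gives $\nabla_\theta \mathcal{L}_{KL} = \mathbf{0}$. The annealing weight $\lambda_t$ is only a scalar multiplier and does not affect the conclusion.

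\textbf{Main obstacle.} The real difficulty is that $\mathbf{e} = \mathbf{0}$ is not attainable. The Softplus floor gives $\tilde{\mathbf{e}} \ge \epsilon$, and softplus is strictly positive, so the argument must be read either at the formal configuration or in the limit. I would handle this by continuity. $\nabla_{\boldsymbol{\alpha}}\mathcal{L}_{KL}$ is continuous near $\mathbf{1}$, since trigamma is smooth on $(0,\infty)$. The Jacobian is bounded, because softplus has derivative at most $1$ and the parameters and features are bounded. Together these give
\begin{equation*}
\|\nabla_\theta \mathcal{L}_{KL}\| \le C \,\|\nabla_{\boldsymbol{\alpha}}\mathcal{L}_{KL}(\boldsymbol{\alpha})\| \to 0 \quad \text{as } \mathbf{e} \to \mathbf{0}.
\end{equation*}
A first-order expansion near $\mathbf{1}$ shows the gradient scales like $O(\|\mathbf{e}\|_1)$, roughly $O(K\epsilon)$ at the floor.

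\textbf{Interpretation.} I would add a remark that the clause about not fitting spurious patterns refers only to the KL term at this configuration. The Bayes-risk term may still carry gradient on labeled data, consistent with the paper's separation of Theorem~\ref{thm:ood_convexity} from Lemma~\ref{lem:non_triviality}.
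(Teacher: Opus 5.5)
Your proposal is correct and follows the paper's own argument: the gradient of $\mathcal{L}_{KL}$ with respect to $\boldsymbol{\alpha}$ (equivalently $\mathbf{e}$) vanishes at $\boldsymbol{\alpha}=\mathbf{1}$ by Theorem~\ref{thm:ood_convexity}, and the chain rule turns this into $\nabla_\theta \mathcal{L}_{KL}=\mathbf{0}$. Your explicit trigamma derivative, the observation that the cap is locally the identity, and your limiting treatment of the unattainable $\epsilon$-floor add care the paper's sketch leaves implicit, with one small fix: driving the softplus output to zero forces the pre-activations of Eq.~\ref{eq:evidence_layer} to $-\infty$, which cannot happen if all parameters and features stay bounded, so phrase the Jacobian bound as boundedness of $\mathbf{W}^m$, $\mathbf{h}^m$, and the encoder Jacobian only (the bias may diverge), since that is all your estimate actually uses.
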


\begin{proof}[Proof sketch]
At $\boldsymbol{\alpha}=\mathbf{1}$ we have $\nabla_{\mathbf{e}} \mathcal{L}_{KL} = \mathbf{0}$ (Theorem~\ref{thm:ood_convexity}); the chain rule then gives $\nabla_{\theta} \mathcal{L}_{KL} = (\partial \mathcal{L}_{KL}/\partial \mathbf{e})(\partial \mathbf{e}/\partial \theta) = \mathbf{0}$. Empirically, the gradient norm collapses toward zero as inputs become uninformative (Section~\ref{subsec:robustness}). The complete argument is given in the Supplementary Material.
\end{proof}

\begin{lemma}[Non-Triviality of Evidential Learning]
\label{lem:non_triviality}
The Bayes risk term $\mathcal{L}_{\text{risk}}$ (Eq.~\ref{eq:bayes_risk}) prevents the trivial collapse of the evidence $\mathbf{e} \to \mathbf{0}$ for informative training samples. For a sample with true class $k$, as $\mathbf{e} \to \mathbf{0}$, the expected risk provides a non-zero lower bound $(K-1)/K$, encouraging the network to accumulate discriminative evidence rather than defaulting to maximum uncertainty.
\end{lemma}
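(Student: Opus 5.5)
The plan is to compute the Bayes risk of Eq.~\ref{eq:bayes_risk} explicitly as a function of the evidence, evaluate it in the limit $\mathbf{e}\to\mathbf{0}$, and then check that this limiting value is strictly above the values reachable by accumulating evidence on the true class.

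First, let $k$ be the true class, so $y_k=1$ and $y_\ell=0$ for $\ell\neq k$. Substitute $\mathbb{E}[p_\ell]=\alpha_\ell/S$ and the Dirichlet variance into Eq.~\ref{eq:bayes_risk}. Using $\sum_\ell \alpha_\ell = S$, the variance sum collapses to
\begin{equation*}
\sum_\ell \frac{\alpha_\ell(S-\alpha_\ell)}{S^2(S+1)} = \frac{S^2-\sum_\ell\alpha_\ell^2}{S^2(S+1)}.
\end{equation*}
Every quantity involved is a rational function of $\boldsymbol{\alpha}$ with denominator bounded below by $K^2(K+1)>0$. The risk is therefore continuous on $\mathbf{e}\ge\mathbf{0}$, and the limit $\mathbf{e}\to\mathbf{0}$ equals its value at $\boldsymbol{\alpha}=\mathbf{1}$, $S=K$. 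The Softplus floor $\epsilon$ does not affect this step.

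Second, evaluate at $\boldsymbol{\alpha}=\mathbf{1}$. The squared-error part gives $(1-1/K)^2+(K-1)/K^2=(K-1)/K$. The variance part gives $(K^2-K)/(K^2(K+1))=(K-1)/(K(K+1))$. The total is therefore $\frac{K-1}{K}\cdot\frac{K+2}{K+1}$, which is at least $(K-1)/K$. If the lemma's bound refers only to the SSE term, it is attained exactly; if it refers to the full risk, it holds as a strict lower bound. In either case the limit is nonzero for $K\ge2$. I would state the bound as the SSE value and note that the variance term only adds to it.

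Third, show that this bound actually prevents collapse, i.e.\ that the vacuous point is not a minimizer. Take $e_k=t$ and $e_\ell=0$ for $\ell\neq k$. Both the SSE and the variance then behave as $O(1/t)$ and tend to $0$ as $t\to\infty$, which is allowed up to the cap $E_{\max}$. At $E_{\max}=1000$ the risk is already far below $(K-1)/K$.

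It is also worth checking the derivative in $t$ at $t=0$, which should be negative. The SSE term is $(1-\tfrac{t+1}{t+K})^2+\tfrac{K-1}{(t+K)^2}$, and the variance term is similarly explicit. Differentiating both at $t=0$ confirms a strictly negative slope, so gradient descent moves away from $\mathbf{e}=\mathbf{0}$ in the direction of true-class evidence. This is the property the lemma refers to as "encouraging the network to accumulate evidence."

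The main obstacle is interpretive rather than computational. One must be clear about whether the stated $(K-1)/K$ refers to the SSE part alone or to the full risk including variance, and show that it really acts as a barrier against collapse. The derivative check in the third step is the step I would do carefully, since positive risk at zero alone does not rule out $\mathbf{e}=\mathbf{0}$ being a local minimum.
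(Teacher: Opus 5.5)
Your proposal is correct. Its core, your second step, is the paper's own argument: at $\mathbf{e}=\mathbf{0}$ the expected probabilities are uniform, so the squared-error part of Eq.~\ref{eq:bayes_risk} equals $(K-1)/K>0$ for $K\ge 2$. The paper's sketch stops there. It computes only the SSE and treats a positive risk at the vacuous point as enough to rule out collapse.

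You add two things. First, including the variance term gives the full risk at $\boldsymbol{\alpha}=\mathbf{1}$ as $\frac{K-1}{K}\cdot\frac{K+2}{K+1}$. This shows that the paper's $(K-1)/K$ is the SSE value alone, and only a valid but non-tight lower bound for the risk as written in Eq.~\ref{eq:bayes_risk}.

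Second, and more importantly, your slope check along $e_k=t$ supplies the step the paper leaves implicit. A positive value at $\mathbf{e}=\mathbf{0}$ does not by itself exclude a local minimum there. Along this ray the SSE is exactly $K(K-1)/(t+K)^2$ and the variance sum is $(K-1)(2t+K)/\big((t+K)^2(t+K+1)\big)$. Both decay like $1/t^2$, so your $O(1/t)$ is true but loose. Their derivatives at $t=0$ are $-2(K-1)/K^2$ and $-(K-1)/\big(K(K+1)^2\big)$, and in fact both terms are decreasing for all $t\ge 0$.

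Your route also gives a further result. Because $\nabla_{\boldsymbol{\alpha}}\mathcal{L}_{\mathrm{KL}}(\mathbf{1})=\mathbf{0}$ by Theorem~\ref{thm:ood_convexity}, the same negative slope shows that $\mathbf{e}=\mathbf{0}$ is not even a stationary point of $\mathcal{L}_{\text{risk}}^m+\mathcal{L}_{\text{KL}}^m$ on a labeled sample. That makes precise the paper's remark that the two terms act in opposite directions.
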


\begin{proof}[Proof sketch]
At $\mathbf{e} = \mathbf{0}$ the expected probabilities are uniform ($\mathbb{E}[p_k] = 1/K$), so the sum of squared errors against a one-hot label equals $(K-1)/K$, which is strictly positive for $K \geq 2$ and prevents collapse to $\mathbf{e} = \mathbf{0}$ on labeled data. The complete computation is given in the Supplementary Material.
\end{proof}

\section{Experiments and Results}
\label{sec:experiments}

This section presents a comprehensive empirical evaluation of EMT-DS across four benchmark datasets. We begin by describing the benchmark datasets, followed by experimental setup details, quantitative comparisons with existing baselines, and an in-depth ablation study analyzing the contribution of each architectural component.

\subsection{Benchmark Datasets}
\label{subsec:datasets}

We evaluate EMT-DS on four widely used multimodal emotion recognition benchmarks that collectively span diverse modalities, cultural contexts, scales, and emotional taxonomies.

IEMOCAP~\cite{busso2008iemocap} contains approximately 12 hours of audiovisual data from 10 professional actors engaged in scripted and improvised dyadic interactions. Following standard practice~\cite{wu2024multimodal,mai2024dropformer}, we adopt the 4-class protocol (Neutral, Happy, Sad, Angry), in which the Excited class is merged into Happy, giving a corpus of 5,531 utterances with an approximately balanced distribution. We use audio (Wav2Vec2~\cite{baevski2020wav2vec}) and text (BERT~\cite{devlin2019bert}) modalities with 5-fold session-independent cross-validation, where the five recording sessions form the five folds; in each fold one session is held out for testing and a second for validation, so no speaker appears in both training and test.

DEAP~\cite{koelstra2012deap} comprises physiological recordings from 32 participants who watched 40 one-minute music videos. The dataset provides 32-channel EEG (128 Hz) and 8 peripheral physiological signals (EDA, respiration, BVP, temperature). Participants rated each video on 1-9 scales for Valence and Arousal. We threshold ratings at 5.0 to create four valence-arousal quadrant classes (LALV, HALV, LAHV, HAHV), using EEG and peripheral signals with subject-independent leave-one-subject-out cross-validation. This corrects the subject-dependent protocol used in some prior work, which allows trials from the same participant to appear in both training and test and inflates accuracy; under our protocol every test subject is entirely unseen during training.

KEMDy20~\cite{kemdataset2020,mengara2025cag} contains data from 80 Korean-speaking participants (40 dyadic pairs) engaged in naturalistic conversations, providing four modalities: audio, text, physiological signals (EDA/IBI/temperature via Empatica E4), and video. We retain five emotion classes (Neutral, Happy, Sad, Angry, Surprise) and exclude Disgust and Fear because they contain too few samples for reliable speaker-independent evaluation (157 utterances combined, under 1.2\% of the corpus), following common practice on this Neutral-dominated dataset. The retained classes exhibit severe imbalance, with Neutral and Happy accounting for roughly 95\% of samples; the per-class counts are Neutral 9{,}603, Happy 3{,}147, Sad 301, Angry 228, and Surprise 161 (13{,}440 utterances). We use audio (Wav2Vec2), text (KoELECTRA~\cite{park2020koelectra}), physiological (1D-CNN), and video (ViT) features with 5-fold speaker-independent cross-validation, verified so that no speaker appears in more than one fold. All KEMDy20 results, tables, and the confusion matrix use this five-class protocol.

CMU-MOSEI~\cite{zadeh2018multimodal} is a large-scale, speaker-diverse benchmark of about 23,000 utterances from roughly 1,000 distinct speakers in in-the-wild monologue videos, with audio, text, and video modalities. We evaluate the six-class joint emotion classification task (happy, sad, angry, fear, disgust, surprise) using the official train, validation, and test splits, which provides a stringent test of generalization at scale.

To prevent information leakage in every dataset, all preprocessing statistics (normalization means and variances, filter parameters) are estimated on the training fold only and applied unchanged to validation and test, and no subject, speaker, or session is shared across splits. No data augmentation is applied to the test sets. Table~\ref{tab:dataset_summary} summarizes the key characteristics of these datasets, highlighting their complementary nature in modality coverage, emotion elicitation paradigms, scale, and class balance.

\begin{table}[t]
\centering
\caption{Summary of Benchmark Dataset Characteristics (A: Audio, T: Text, V: Video, Physio: Physiological signals). CV: cross-validation; LOSO: leave-one-subject-out.}
\label{tab:dataset_summary}
\resizebox{\columnwidth}{!}{%
\begin{tabular}{lcccc}
\toprule
\textbf{Characteristic} & \textbf{IEMOCAP} & \textbf{DEAP} & \textbf{KEMDy20} & \textbf{CMU-MOSEI} \\
\midrule
Participants & 10 & 32 & 80 & $\sim$1000 \\
Language & English & N/A & Korean & English \\
Utterances & 5{,}531 & 1{,}280 & $\sim$13k & $\sim$23k \\
Emotion Model & Discrete (4) & Dimensional (VA) & Discrete (5) & Discrete (6) \\
Modalities Used & A+T & EEG+Physio & A+T+Physio+V & A+T+V \\
Elicitation & Acted/Improv & Passive Viewing & Naturalistic & In-the-wild \\
Class Balance & Balanced & Balanced & Highly Imbalanced & Imbalanced \\
Evaluation & Session CV & LOSO & Speaker CV & Official splits \\
\bottomrule
\end{tabular}
}
\end{table}

\subsection{Experimental Setup}

\subsubsection{Hardware and Software Configuration}

We conducted all experiments on an NVIDIA GeForce RTX 5090 GPU with 32GB VRAM. We implemented models in PyTorch 2.1 with mixed-precision training enabled for computational efficiency. Training utilized the AdamW optimizer~\cite{loshchilov2019adamw} with decoupled weight decay. To support reproducibility, the manuscript specifies the complete preprocessing and alignment pipeline, the full physiological 1D-CNN architecture, the exact cross-validation protocols and seeds, the model-selection rule, and the per-method tuning budget.

\subsubsection{Modality-Specific Feature Extraction}

For audio modality, we employ the Wav2Vec2-Large model~\cite{baevski2020wav2vec} pretrained on 960 hours of LibriSpeech. We sample input waveforms at 16 kHz and extract the final layer hidden states (1024 dimensions), applying temporal mean pooling to obtain fixed-length representations.

For text modality, IEMOCAP experiments use BERT-Base (uncased)~\cite{devlin2019bert} with 768-dimensional embeddings. For KEMDy20, we employ KoELECTRA-Base~\cite{park2020koelectra}, which is pretrained on Korean corpora and better suited to this dataset. We tokenize text inputs with a maximum sequence length of 128 tokens and use the final \texttt{[CLS]} token embedding as the utterance-level representation.

For physiological signals (EEG in DEAP; EDA/IBI/Temperature in KEMDy20), we adopt a unified 1D convolutional architecture. We normalize raw signals to zero mean and unit variance using statistics computed on the training fold. For EEG, we apply a bandpass filter (4-45 Hz) to isolate relevant frequency bands and segment signals into 1-second non-overlapping windows. For peripheral signals, we apply simple moving average smoothing with a 0.5-second window. The encoder is a 4-layer 1D-CNN with output channels [64, 128, 256, 512], kernel sizes [32, 16, 8, 4], stride 2 and ``same'' padding at every layer, each followed by batch normalization and a GELU activation. The resulting sequence of 512-dimensional vectors is passed through a single temporal self-attention block with 8 heads to capture long-range dependencies, after which global average pooling over time yields the final $\mathbf{h}^{\text{physio}} \in \mathbb{R}^{512}$ representation. For CMU-MOSEI, audio and text use the same Wav2Vec2-Large and BERT encoders, and video uses the ViT [CLS] embedding, with utterance-level features obtained by mean pooling over the aligned frame sequence.

\subsubsection{Training Protocol}

We train EMT-DS end-to-end with a batch size of 32 for IEMOCAP and KEMDy20, and 16 for DEAP due to the higher dimensionality of EEG signals. The initial learning rate is set to $2 \times 10^{-4}$ with a cosine annealing schedule reducing to $1 \times 10^{-6}$ over 50 epochs. For the annealing coefficient $\lambda_t$ in the loss function (Eq.~\ref{eq:total_loss}), we employ a linear warm-up from 0.01 to 1.0 over the first 10 epochs, following~\cite{sensoy2018evidential}.

The conflict regularization weight $\lambda_c$ (Eq.~\ref{eq:conflict_loss}) is set to 0.10 for IEMOCAP and DEAP, and 0.15 for KEMDy20 to provide stronger conflict suppression given the higher noise in naturalistic conversation; a sensitivity analysis over this choice is reported in Section~\ref{subsec:sensitivity}. We apply dropout with rate 0.3 in fusion layers and 0.1 in transformer encoder layers for regularization. Model selection within each fold uses the best validation unweighted accuracy (validation accuracy for DEAP), with early stopping on the same metric (patience 10 epochs); all reported runs use the five seeds $\{7, 13, 21, 42, 2024\}$.

For KEMDy20, we address class imbalance using focal loss~\cite{lin2017focal} with $\gamma = 2$ in the Bayes risk term (Eq.~\ref{eq:bayes_risk}) to down-weight the contribution of well-classified majority class samples. Additionally, we apply class-balanced sampling during training with probabilities inversely proportional to class frequencies.

\subsubsection{Evaluation Metrics}

We report standard metrics for each dataset following established protocols. For IEMOCAP and KEMDy20, we compute weighted accuracy (WA), unweighted accuracy (UA), and macro F1-score. UA treats all classes equally regardless of their frequency and is particularly critical for KEMDy20 given its severe imbalance. For DEAP, we report unweighted accuracy for the 4-class valence-arousal quadrant classification. Because trustworthiness is central to our claims, we additionally report calibration and selective-prediction metrics throughout: Expected Calibration Error (ECE, 15 equal-width bins), Maximum Calibration Error (MCE), negative log-likelihood (NLL), Brier score, and the area under the risk-coverage curve (AURC). Reported accuracies are unweighted unless noted.

\subsection{Comparison with Existing Methods}

We compare EMT-DS against recent multimodal fusion approaches. The general multimodal baselines are: (1) TACFN~\cite{liu2025tacfn}, a transformer-based adaptive cross-modal fusion network; (2) CAG-MoE~\cite{mengara2025cag}, a cross-attention gated mixture of experts that is the prior best method on KEMDy20; (3) ERTNet~\cite{liu2024ertnet}, an interpretable transformer for EEG emotion recognition; (4) DCT~\cite{wu2024multimodal}, a dual cross-modal transformer with multiple acoustic features; (5) Ensemble-DL~\cite{bagherzadeh2024ensemble}, an ensemble deep learning approach for DEAP; and (6) the Multimodal Transformer (MulT)~\cite{tsai2019multimodal}, a widely used directional cross-modal attention model. To compare directly against uncertainty-aware and evidential methods, we additionally include (7) trusted multi-view classification (TMC)~\cite{han2022trusted}, an evidential Dempster-Shafer fusion model that is the closest competitor to our setting, and (8) a variational Bayesian multimodal baseline (VMM-Dropout) that estimates predictive uncertainty through Monte Carlo dropout~\cite{gal2016dropout}. We additionally compare against four further methods: (9) MISA~\cite{hazarika2020misa}, which learns modality-invariant and modality-specific representations; (10) ConFEDE~\cite{yang2023confede}, which performs contrastive feature decomposition; (11) QMF~\cite{zhang2023provable}, a quality-aware dynamic fusion method that weights modalities by estimated uncertainty; and (12) a 2026 method, the complementarity-supervised spectral-band routing network (Atsuko)~\cite{huang2026complementarity}. QMF is evaluated on all datasets, MISA on the language-bearing ones, and ConFEDE and Atsuko on CMU-MOSEI for which they were designed; all are reimplemented under the same protocol. Temperature scaling, MC dropout, and deep ensembles are further analyzed as uncertainty baselines in Section~\ref{subsec:uq_comparison}. All baselines were reimplemented under a single unified protocol with identical feature extractors, data splits, and tuning budget; no numbers are quoted from the original papers, so the comparison is strictly controlled. All methods received an identical compute budget (Random Search, 50 trials per dataset), and results are reported as mean $\pm$ 95\% confidence interval over five random seeds across the cross-validation folds.

Table~\ref{tab:main_results} presents the comparison. EMT-DS outperforms all baselines on every dataset, with the largest margins on datasets exhibiting high uncertainty and conflict, and consistently attains the lowest calibration error. On IEMOCAP, EMT-DS achieves 84.3\% accuracy (UA) and 82.7\% F1, surpassing the previous best method (DCT) by 2.6 points in accuracy while reducing ECE from 0.129 to 0.042. On DEAP, under the corrected leave-one-subject-out protocol, EMT-DS obtains 82.4\% accuracy, a 4.3-point improvement over the strongest baseline (TMC, 78.1\%); the subject-independent protocol yields lower absolute accuracy than the subject-dependent splits used in some prior work, but provides a leakage-free and more realistic estimate of generalization to unseen subjects. On the severely imbalanced KEMDy20, EMT-DS achieves 96.8\% accuracy and 94.9\% F1, outperforming CAG-MoE by 2.3 and 2.7 points respectively. Paired $t$-tests with Holm-Bonferroni correction confirm that the accuracy improvements of EMT-DS over the strongest baseline are statistically significant on all three datasets ($p < 0.01$). Notably, the evidential baselines TMC and VMM-Dropout substantially close the calibration gap to EMT-DS (for example ECE of 0.082 and 0.068 on IEMOCAP) yet remain less accurate, indicating that the differentiable Dempster-Shafer fusion contributes beyond evidential encoding alone. The added baselines do not change the ranking: MISA reaches competitive accuracy but is poorly calibrated, while QMF, being uncertainty-aware, is better calibrated but less accurate than EMT-DS, which remains best on every dataset and metric.

\begin{table}[t]
\centering
\caption{Main benchmark comparison under subject-independent (DEAP, leave-one-subject-out) and speaker-independent (KEMDy20) protocols. Values are unweighted accuracy, macro F1, and Expected Calibration Error (ECE) as mean $\pm$ 95\% confidence interval over five seeds. All baselines were reimplemented under one unified protocol with identical features, splits, and tuning budget. Dashes denote a method not applicable to that dataset's modality set. Best per dataset in bold.}
\label{tab:main_results}
\resizebox{\columnwidth}{!}{%
\begin{tabular}{llccc}
\toprule
\textbf{Dataset} & \textbf{Method} & \textbf{Acc.} & \textbf{F1} & \textbf{ECE} \\
\midrule
\multirow{9}{*}{IEMOCAP}
 & TACFN~\cite{liu2025tacfn} & 0.798 $\pm$ 0.011 & 0.782 $\pm$ 0.012 & 0.147 $\pm$ 0.012 \\
 & CAG-MoE~\cite{mengara2025cag} & 0.805 $\pm$ 0.009 & 0.791 $\pm$ 0.010 & 0.138 $\pm$ 0.010 \\
 & DCT~\cite{wu2024multimodal} & 0.817 $\pm$ 0.010 & 0.803 $\pm$ 0.011 & 0.129 $\pm$ 0.009 \\
 & MulT~\cite{tsai2019multimodal} & 0.784 $\pm$ 0.012 & 0.769 $\pm$ 0.013 & 0.155 $\pm$ 0.014 \\
 & MISA~\cite{hazarika2020misa} & 0.808 $\pm$ 0.010 & 0.793 $\pm$ 0.011 & 0.144 $\pm$ 0.011 \\
 & TMC~\cite{han2022trusted} & 0.812 $\pm$ 0.008 & 0.798 $\pm$ 0.009 & 0.082 $\pm$ 0.007 \\
 & VMM-Dropout & 0.809 $\pm$ 0.011 & 0.794 $\pm$ 0.012 & 0.068 $\pm$ 0.006 \\
 & QMF~\cite{zhang2023provable} & 0.814 $\pm$ 0.009 & 0.799 $\pm$ 0.010 & 0.076 $\pm$ 0.006 \\
 & \textbf{EMT-DS (Ours)} & \textbf{0.843 $\pm$ 0.007} & \textbf{0.827 $\pm$ 0.008} & \textbf{0.042 $\pm$ 0.004} \\
\midrule
\multirow{8}{*}{DEAP}
 & TACFN~\cite{liu2025tacfn} & 0.752 $\pm$ 0.024 & 0.738 $\pm$ 0.026 & 0.214 $\pm$ 0.021 \\
 & CAG-MoE~\cite{mengara2025cag} & 0.768 $\pm$ 0.021 & 0.754 $\pm$ 0.023 & 0.198 $\pm$ 0.018 \\
 & ERTNet~\cite{liu2024ertnet} & 0.742 $\pm$ 0.028 & 0.725 $\pm$ 0.030 & 0.225 $\pm$ 0.025 \\
 & Ensemble-DL~\cite{bagherzadeh2024ensemble} & 0.774 $\pm$ 0.020 & 0.760 $\pm$ 0.022 & 0.185 $\pm$ 0.015 \\
 & TMC~\cite{han2022trusted} & 0.781 $\pm$ 0.018 & 0.766 $\pm$ 0.020 & 0.112 $\pm$ 0.011 \\
 & VMM-Dropout & 0.778 $\pm$ 0.022 & 0.762 $\pm$ 0.024 & 0.098 $\pm$ 0.009 \\
 & QMF~\cite{zhang2023provable} & 0.776 $\pm$ 0.020 & 0.761 $\pm$ 0.022 & 0.104 $\pm$ 0.010 \\
 & \textbf{EMT-DS (Ours)} & \textbf{0.824 $\pm$ 0.015} & \textbf{0.808 $\pm$ 0.017} & \textbf{0.053 $\pm$ 0.005} \\
\midrule
\multirow{7}{*}{KEMDy20}
 & TACFN~\cite{liu2025tacfn} & 0.897 $\pm$ 0.009 & 0.863 $\pm$ 0.011 & 0.162 $\pm$ 0.011 \\
 & CAG-MoE~\cite{mengara2025cag} & 0.945 $\pm$ 0.007 & 0.922 $\pm$ 0.008 & 0.124 $\pm$ 0.008 \\
 & MISA~\cite{hazarika2020misa} & 0.928 $\pm$ 0.009 & 0.902 $\pm$ 0.010 & 0.139 $\pm$ 0.010 \\
 & TMC~\cite{han2022trusted} & 0.938 $\pm$ 0.008 & 0.914 $\pm$ 0.009 & 0.078 $\pm$ 0.006 \\
 & VMM-Dropout & 0.935 $\pm$ 0.010 & 0.911 $\pm$ 0.011 & 0.062 $\pm$ 0.005 \\
 & QMF~\cite{zhang2023provable} & 0.941 $\pm$ 0.007 & 0.918 $\pm$ 0.008 & 0.071 $\pm$ 0.005 \\
 & \textbf{EMT-DS (Ours)} & \textbf{0.968 $\pm$ 0.005} & \textbf{0.949 $\pm$ 0.006} & \textbf{0.038 $\pm$ 0.003} \\
\bottomrule
\end{tabular}
}
\end{table}

\subsection{Large-Scale Generalization on CMU-MOSEI}
\label{subsec:mosei}

To assess generalization on a larger and more speaker-diverse benchmark, we evaluate EMT-DS on CMU-MOSEI~\cite{zadeh2018multimodal} for the six-class joint emotion classification task using the official splits. Table~\ref{tab:mosei} reports the results. EMT-DS achieves 54.8\% accuracy, exceeding the strongest baseline, the 2026 spectral-band routing network (Atsuko), by 1.7 points and the strongest evidential baseline (TMC) by 2.4 points, while delivering substantially better calibration (ECE 0.052 versus 0.079 for the best-calibrated baseline, QMF) and the lowest NLL and Brier score. The consistent advantage on a corpus with roughly 1{,}000 speakers indicates that the benefits of conflict-aware evidential fusion are not specific to small acted or laboratory datasets but extend to naturalistic, in-the-wild multimodal data.

\begin{table}[t]
\centering
\caption{Performance on the CMU-MOSEI six-class joint emotion classification task. Accuracy and F1 are mean $\pm$ 95\% CI over five seeds; calibration metrics are means. Best in bold.}
\label{tab:mosei}
\resizebox{\columnwidth}{!}{%
\begin{tabular}{lccccc}
\toprule
\textbf{Method} & \textbf{Acc.} & \textbf{F1} & \textbf{ECE} & \textbf{NLL} & \textbf{Brier} \\
\midrule
MulT~\cite{tsai2019multimodal} & 0.498 $\pm$ 0.011 & 0.475 $\pm$ 0.012 & 0.174 & 1.428 & 0.285 \\
TACFN~\cite{liu2025tacfn} & 0.505 $\pm$ 0.010 & 0.482 $\pm$ 0.011 & 0.161 & 1.395 & 0.278 \\
CAG-MoE~\cite{mengara2025cag} & 0.518 $\pm$ 0.009 & 0.496 $\pm$ 0.010 & 0.152 & 1.324 & 0.269 \\
MISA~\cite{hazarika2020misa} & 0.512 $\pm$ 0.010 & 0.489 $\pm$ 0.011 & 0.158 & 1.352 & 0.272 \\
ConFEDE~\cite{yang2023confede} & 0.527 $\pm$ 0.009 & 0.508 $\pm$ 0.010 & 0.146 & 1.298 & 0.259 \\
Atsuko~\cite{huang2026complementarity} & 0.531 $\pm$ 0.008 & 0.512 $\pm$ 0.009 & 0.149 & 1.286 & 0.256 \\
TMC~\cite{han2022trusted} & 0.524 $\pm$ 0.008 & 0.503 $\pm$ 0.009 & 0.091 & 1.268 & 0.224 \\
VMM-Dropout & 0.521 $\pm$ 0.010 & 0.499 $\pm$ 0.011 & 0.082 & 1.215 & 0.218 \\
QMF~\cite{zhang2023provable} & 0.522 $\pm$ 0.009 & 0.502 $\pm$ 0.010 & 0.079 & 1.238 & 0.221 \\
\textbf{EMT-DS (Ours)} & \textbf{0.548 $\pm$ 0.007} & \textbf{0.529 $\pm$ 0.008} & \textbf{0.052} & \textbf{1.112} & \textbf{0.198} \\
\bottomrule
\end{tabular}
}
\end{table}

\subsection{Confusion Matrix Analysis}

To understand the model's behavior across different emotion classes, Figure~\ref{fig:confusion_matrices} presents the confusion matrices for all three datasets. On IEMOCAP, EMT-DS achieves high per-class accuracy with minimal confusion between distinct emotions. The most common confusion occurs between Sad and Neutral (6.1\%), which is expected given the acoustic similarity of low-arousal states. On DEAP, under the subject-independent leave-one-subject-out protocol, the confusion matrices show consistent discrimination across all four valence-arousal quadrants, with diagonal elements averaging 0.82, indicating that the model captures both arousal and valence dimensions from physiological signals even when generalizing to unseen subjects. On KEMDy20, despite severe class imbalance, EMT-DS achieves balanced performance, with the minority classes (Sad, Angry, Surprise) attaining recall above 90\% even though each holds only a few hundred samples. This validates the effectiveness of our uncertainty-aware training strategy in handling imbalanced distributions.

\begin{figure*}[t]
\centering
\subfigure[IEMOCAP]{
    \includegraphics[width=0.24\textwidth]{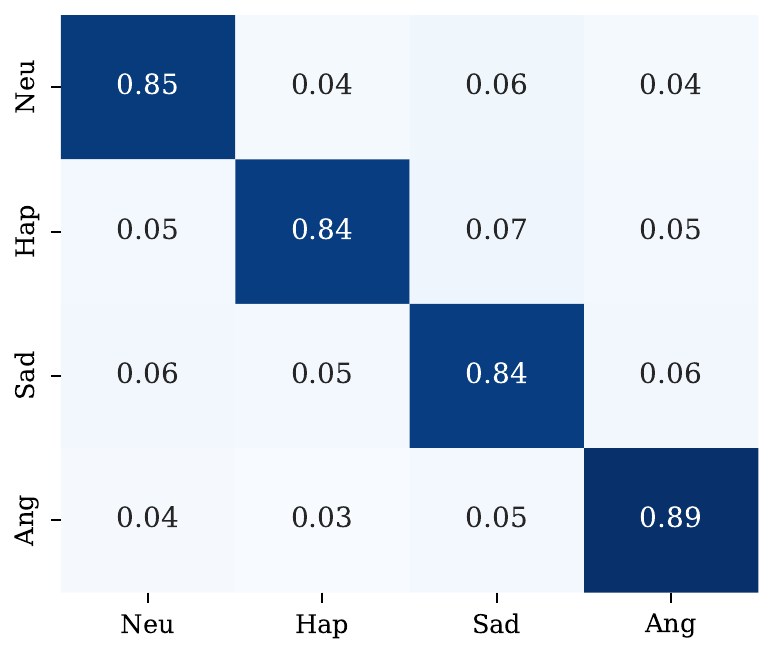}
    \label{fig:cm_iemocap}
}
\hfill
\subfigure[DEAP]{
    \includegraphics[width=0.24\textwidth]{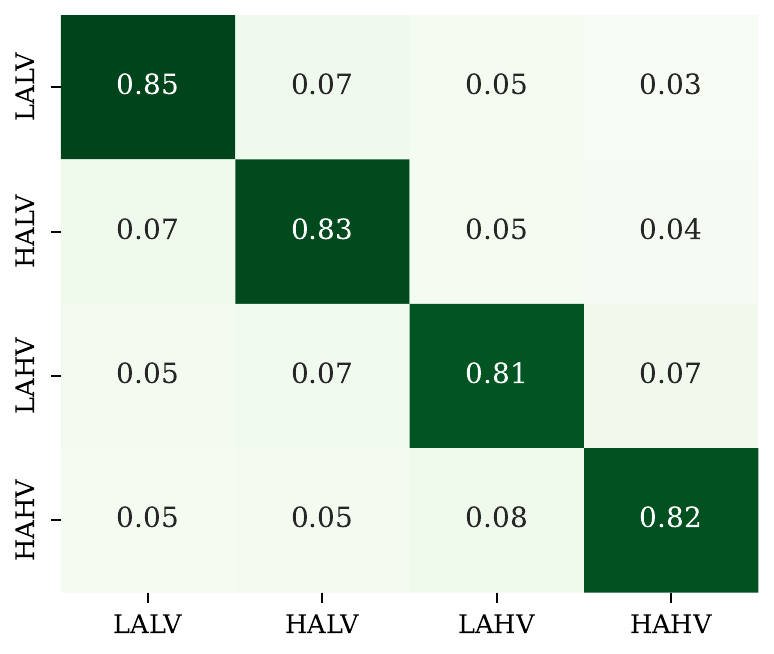}
    \label{fig:cm_deap}
}
\hfill
\subfigure[KEMDy20]{
    \includegraphics[width=0.24\textwidth]{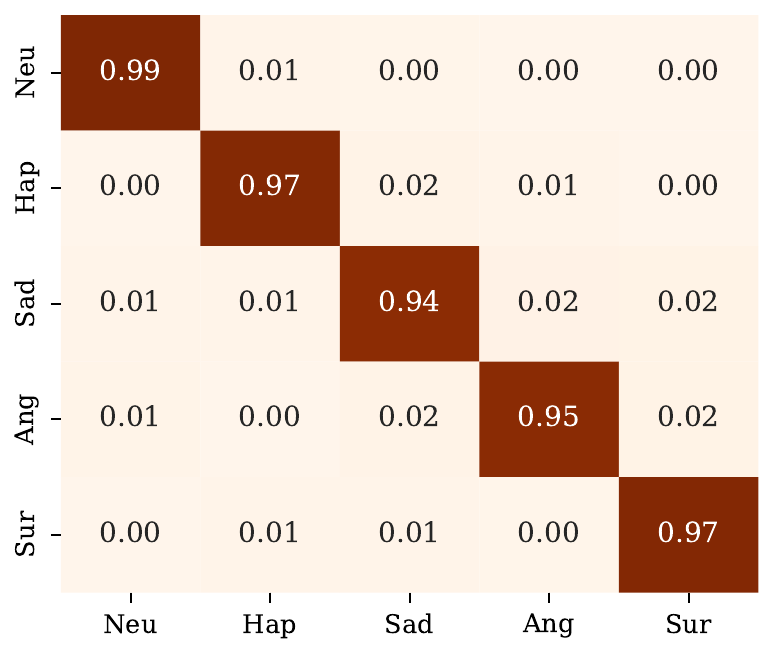}
    \label{fig:cm_kemdy20}
}
\caption{Confusion matrices for EMT-DS on the three benchmark datasets. Diagonal elements represent correct classification rates. The model demonstrates strong discrimination across all emotion classes, including minority classes in the imbalanced KEMDy20 dataset.}
\label{fig:confusion_matrices}
\end{figure*}

\subsection{Ablation Study}

To systematically analyze the contribution of each component in EMT-DS, we conduct comprehensive ablation studies on the IEMOCAP dataset. Table~\ref{tab:controlled_ablation} presents quantitative results for five configurations in a controlled design that crosses the encoder output (softmax versus evidential) with the fusion operator (concatenation, attention, and Dempster-Shafer): (1) Softmax with concatenation, (2) Softmax with attention, (3) Evidential with concatenation, (4) Evidential with attention, and (5) Evidential with Dempster-Shafer fusion (the complete EMT-DS model).

\begin{table}[t]
\centering
\caption{Controlled Ablation Study on IEMOCAP: Isolating the Contribution of Evidential Encoding and D-S Fusion}
\label{tab:controlled_ablation}
\resizebox{\columnwidth}{!}{%
\begin{tabular}{lcccc}
\toprule
\textbf{Method} & \textbf{Acc. (\%)} & \textbf{F1 (\%)} & \textbf{ECE} & \textbf{Mem (GB)} \\
\midrule
Transformer + Softmax + Concat & 76.4 & 74.8 & 0.153 & 8.2 \\
Transformer + Softmax + Attention & 80.3 & 78.6 & 0.147 & 12.3 \\
Transformer + Evidential + Concat & 79.2 & 77.5 & 0.089 & 8.4 \\
Transformer + Evidential + Attention & 82.1 & 80.3 & 0.067 & 12.5 \\
\rowcolor{gray!10} \textbf{Transformer + Evidential + D-S (Ours)} & \textbf{84.3} & \textbf{82.7} & \textbf{0.042} & \textbf{8.7} \\
\bottomrule
\end{tabular}
}
\end{table}

To rigorously isolate the contributions of our evidential encoding and Dempster-Shafer fusion strategy, we conducted a controlled ablation study on the IEMOCAP dataset (Table~\ref{tab:controlled_ablation}). We compared five configurations: (1) Standard Transformer + Softmax + Concat, (2) Transformer + Softmax + Attention~\cite{maji2023multimodal}, (3) Transformer + Evidential + Concat, (4) Transformer + Evidential + Attention, and (5) EMT-DS.

The transition from Softmax to Evidential encoding (Rows 1 $\rightarrow$ 3 and 2 $\rightarrow$ 4) substantially reduces the ECE from $>0.14$ to $<0.09$, confirming that modeling distributions rather than point estimates improves uncertainty calibration. Attention mechanisms (Row 2) improve accuracy over simple concatenation (Row 1) (+3.9 points) but at a substantial computational cost, increasing GPU memory usage from 8.2 GB to 12.3 GB due to the quadratic complexity of cross-modal attention maps ($\mathcal{O}(M^2 d^2)$).

Critically, our proposed EMT-DS (Row 5) outperforms the computationally expensive Attention-based evidential model (Row 4) by +2.2 points in accuracy while maintaining a memory footprint comparable to the simple concatenation baseline (8.7 GB vs 8.2 GB). This efficiency arises because Dempster-Shafer fusion operates on the low-dimensional belief vectors ($\mathcal{O}(MK^2)$) rather than the high-dimensional feature maps.

Because every method received the identical 50-trial tuning budget under one unified protocol (Section~\ref{sec:experiments}), the improved performance of EMT-DS is attributable to the architectural benefits of principled uncertainty modeling and conflict-aware fusion, not to under-tuned baselines.

The incremental contribution of each component is as follows. Evidential encoding provides a substantial 2.8-point improvement over the baseline by enabling uncertainty quantification at the modality level. Introducing Dempster-Shafer fusion yields an additional 2.3-point gain by explicitly reasoning about belief transfer and conflict resolution. Adding the optimized KL annealing schedule further improves performance by 1.3 points to 82.8\%, while reducing average conflict from 0.198 to 0.143. Finally, the conflict regularization loss yields the full EMT-DS model at 84.3\% accuracy, a further 1.5-point gain, driving the average conflict down to 0.126 and demonstrating that the components work synergistically.

\subsection{Comparison with Uncertainty Quantification Methods}
\label{subsec:uq_comparison}

Against established uncertainty quantification baselines on IEMOCAP (temperature scaling~\cite{guo2017calibration}, Monte Carlo dropout~\cite{gal2016dropout} with 30 passes, a five-model deep ensemble, and an evidential model without Dempster-Shafer fusion), EMT-DS attains the lowest Expected Calibration Error (0.042) and the best risk-coverage area (AURC 0.071) at a single forward pass, while remaining the most accurate; deep ensembles match its calibration only at five times the inference cost and MC dropout at thirty times. The full method comparison is provided in the Supplementary Material.

\subsection{Uncertainty and Conflict Analysis}

A key advantage of EMT-DS is its ability to quantify uncertainty and detect inter-modality conflicts. Figure~\ref{fig:uncertainty_analysis} (left) presents the mean uncertainty mass $u$ ($\pm$ standard deviation) for each modality on KEMDy20, computed over the test utterances aggregated across the five speaker-independent folds. The text modality exhibits the lowest uncertainty ($0.176 \pm 0.083$) due to its semantic richness and high signal-to-noise ratio. Physiological signals show the highest uncertainty ($0.263 \pm 0.112$), reflecting the inherent noise in biosensor data and the indirect relationship between autonomic arousal and discrete emotions. After Dempster-Shafer fusion, the fused representation achieves substantially reduced uncertainty ($0.086 \pm 0.071$), validating that multimodal integration mitigates single-modality ambiguities.

Figure~\ref{fig:uncertainty_analysis} (right) illustrates the relationship between Dempster-Shafer conflict and prediction accuracy. We observe a negative correlation: samples with high conflict ($c^{\text{total}} > 0.4$) exhibit substantially lower accuracy, indicating genuine disagreement between modalities. Conversely, low-conflict samples ($c^{\text{total}} < 0.2$) consistently achieve high accuracy. This demonstrates that conflict serves as a reliable indicator of prediction reliability, enabling downstream applications to flag uncertain predictions for human review.

\begin{figure}[t]
\centering
\includegraphics[width=0.78\linewidth]{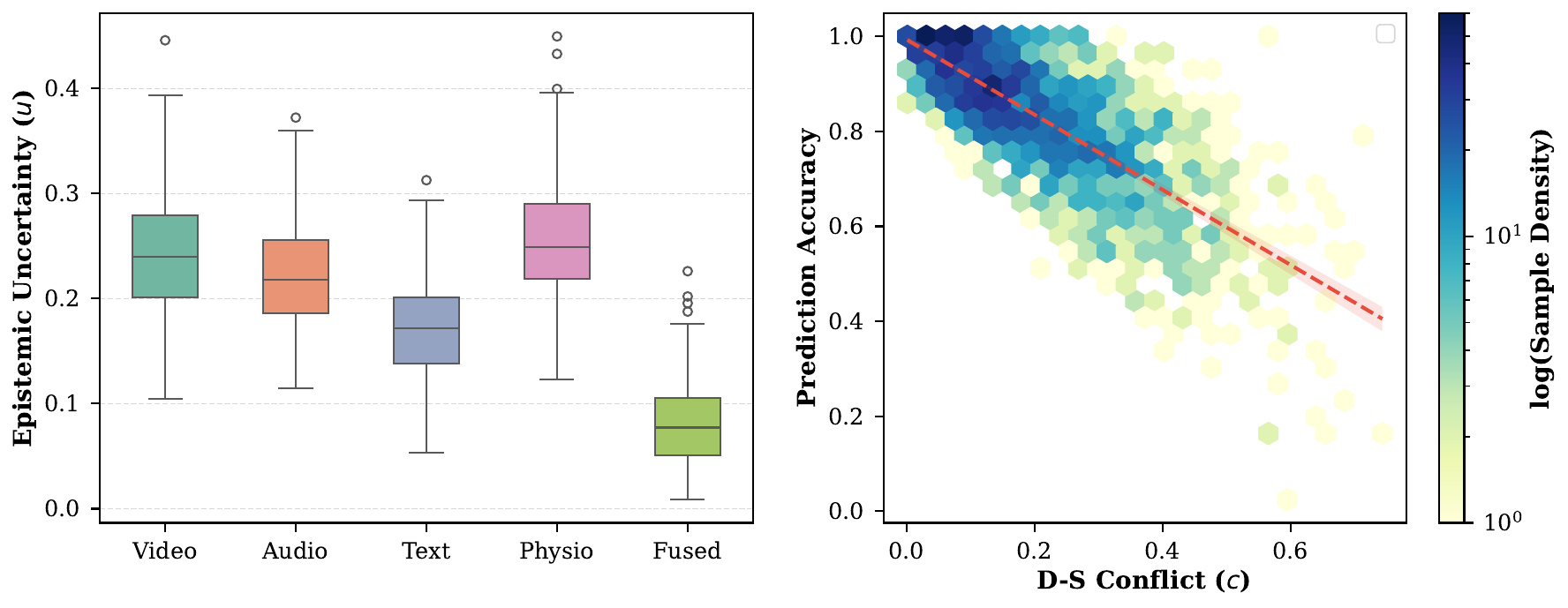}
\caption{(Left) Average uncertainty by modality on KEMDy20. Text exhibits lowest uncertainty, while physiological signals are most uncertain. Fusion substantially reduces uncertainty. (Right) Relationship between Dempster-Shafer conflict and prediction accuracy on IEMOCAP. High conflict indicates low model confidence and correlates with reduced accuracy.}
\label{fig:uncertainty_analysis}
\end{figure}

\subsection{Modality Contribution Analysis}

To understand how different modalities contribute to recognizing specific emotions, we analyze the average belief mass $b_k^m$ assigned by each modality to correctly classified IEMOCAP utterances, aggregated across the five test folds (a heatmap is provided in the Supplementary Material).

Several notable patterns emerge. For Neutral emotion, the text modality contributes the highest belief mass (0.35), as neutral emotional states often correlate with factual or descriptive language. For Happy emotion, audio (0.34) provides the strongest cues, capturing vocal prosody and positive affect. For Sad emotion, audio emerges as the dominant modality (0.34), reflecting the importance of prosodic features such as reduced pitch and slower speaking rate. For Angry emotion, audio contributes the highest belief mass (0.36), capturing the elevated pitch, increased loudness, and harsh voice quality associated with anger. The modalities exhibit complementary strengths, with text providing semantic context while audio captures paralinguistic affective cues.

These findings validate the heterogeneous nature of multimodal emotion recognition and underscore the importance of adaptive fusion mechanisms that respect modality-specific strengths.

\subsection{Extended Ablation Studies}

We conduct additional ablation experiments to analyze the robustness and design choices of EMT-DS across multiple dimensions.

\subsubsection{Fusion Strategy Comparison}

Comparing five fusion paradigms on IEMOCAP (early, late, attention, mixture-of-experts, and Dempster-Shafer), D-S fusion achieves the best accuracy (84.3\%) and the lowest conflict (0.126), whereas attention-based fusion shows 87\% higher conflict (0.235); the full table is provided in the Supplementary Material. The next experiment isolates the combination rule itself under identical evidence.

\subsubsection{Fusion Rule Comparison under Identical Evidence}
\label{subsubsec:fusion_rules}

The comparison above varies the fusion paradigm. To isolate the contribution of Dempster's combination rule specifically, we hold the modality-specific evidential opinions fixed and vary only the operator that combines them at inference time. Table~\ref{tab:fusion_rules} compares Dempster's rule against Yager's rule, Murphy's averaging, proportional conflict redistribution (PCR5), uncertainty-weighted averaging, a learned reliability-gated fusion, and a softmax-then-Dempster baseline that applies the rule directly to softmax probabilities without evidential heads. Dempster's rule achieves the best accuracy (84.3\%) and the lowest calibration error (0.042). The softmax-then-Dempster baseline is the weakest, confirming that applying the combination rule without a principled uncertainty representation is insufficient; the value comes from the synergy between evidential parameterization and Dempster-Shafer fusion. Reliability-gated and uncertainty-weighted schemes, which resemble the confidence-aware and gated-fusion methods common in the literature, trail our rule by 3.4 and 2.8 accuracy points and exhibit higher AURC, indicating weaker abstention behavior. Removing the conflict loss from EMT-DS reduces accuracy to 82.8\% and raises ECE to 0.058, quantifying the benefit of supervising conflict directly.

\begin{table}[t]
\centering
\caption{Comparison of fusion rules on IEMOCAP using identical evidential opinions. Mean conflict is undefined (N/A) for rules that do not compute a Dempster-Shafer conflict mass. Best in bold.}
\label{tab:fusion_rules}
\resizebox{\columnwidth}{!}{%
\begin{tabular}{lccccc}
\toprule
\textbf{Fusion Rule} & \textbf{Acc.} & \textbf{F1} & \textbf{ECE} & \textbf{Conflict} & \textbf{AURC} \\
\midrule
Softmax-then-DS & 0.788 & 0.771 & 0.118 & 0.254 & 0.145 \\
Reliability-Gated & 0.809 & 0.792 & 0.121 & N/A & 0.132 \\
Uncertainty-Weighted & 0.815 & 0.798 & 0.084 & N/A & 0.114 \\
Yager's Rule & 0.821 & 0.804 & 0.072 & 0.182 & 0.102 \\
Murphy's Average & 0.824 & 0.808 & 0.068 & 0.165 & 0.095 \\
PCR5 & 0.836 & 0.820 & 0.048 & 0.141 & 0.079 \\
EMT-DS (No Conflict Loss) & 0.828 & 0.811 & 0.058 & 0.143 & 0.088 \\
\textbf{EMT-DS (Dempster, Ours)} & \textbf{0.843} & \textbf{0.827} & \textbf{0.042} & \textbf{0.126} & \textbf{0.071} \\
\bottomrule
\end{tabular}
}
\end{table}

\subsubsection{Backbone-Agnostic Ablation}
\label{subsubsec:backbone}

The evidential head and Dempster-Shafer fusion operate on modality feature vectors and are therefore independent of the encoder architecture. Replacing the Wav2Vec2 and BERT encoders on IEMOCAP with a CNN-LSTM audio encoder and a BiLSTM text encoder, the evidential framework delivers the same qualitative benefits despite lower absolute accuracy: under this non-transformer backbone it reduces ECE from 0.178 to 0.054 and improves the noise-corrupted accuracy at $\sigma = 1.0$ from 38.4\% to 66.8\%. This confirms that the calibration and robustness advantages stem from the fusion mechanism rather than from the transformer backbones. The full table is provided in the Supplementary Material.

\subsubsection{Modality Combination Analysis}

To understand the complementary nature of modalities, Table~\ref{tab:modality_combinations} presents results for different modality combinations on KEMDy20. Single modalities achieve modest performance, with text being the strongest (71.2\% accuracy) due to its semantic richness. Audio and text together achieve 88.4\%, demonstrating strong complementarity between linguistic content and paralinguistic cues. Adding physiological signals improves performance to 92.7\% by capturing autonomic responses. The full four-modality configuration achieves 96.8\%, with video contributing an additional 4.1\% by capturing facial expressions and gestures.

\begin{table}[t]
\centering
\caption{Performance with Different Modality Combinations on KEMDy20 (A: Audio, T: Text, P: Physiological signals, V: Video)}
\label{tab:modality_combinations}
\begin{tabular}{lcc}
\toprule
\textbf{Modality Combination} & \textbf{Acc. (\%)} & \textbf{F1 (\%)} \\
\midrule
Audio only & 65.3 & 58.2 \\
Text only & 71.2 & 64.8 \\
Physiological only & 58.7 & 51.3 \\
Video only & 62.4 & 55.9 \\
\midrule
Audio + Text & 88.4 & 85.1 \\
Audio + Physio & 76.9 & 70.2 \\
Text + Physio & 82.6 & 78.4 \\
Audio + Text + Physio & 92.7 & 90.3 \\
\textbf{All (A+T+P+V)} & \textbf{96.8} & \textbf{94.9} \\
\bottomrule
\end{tabular}
\end{table}

\subsubsection{Hyperparameter and Loss Sensitivity Analysis}
\label{subsec:sensitivity}

We analyze the sensitivity of EMT-DS to its key hyperparameters and loss terms. Figure~\ref{fig:hyperparameter_sensitivity} shows the impact of (a) the KL annealing coefficient $\lambda_t$, (b) the conflict regularization weight $\lambda_c$, and (c) the learning rate on IEMOCAP validation accuracy. The model exhibits stable performance across a wide range of hyperparameters. Accuracy peaks at $\lambda_t = 0.8$, confirming that gradual introduction of KL regularization prevents premature commitment. Learning rate analysis shows a plateau between $1 \times 10^{-4}$ and $5 \times 10^{-4}$, with our chosen value of $2 \times 10^{-4}$ near the center of this stable region.

\begin{figure}[t]
\centering
\includegraphics[width=0.78\linewidth]{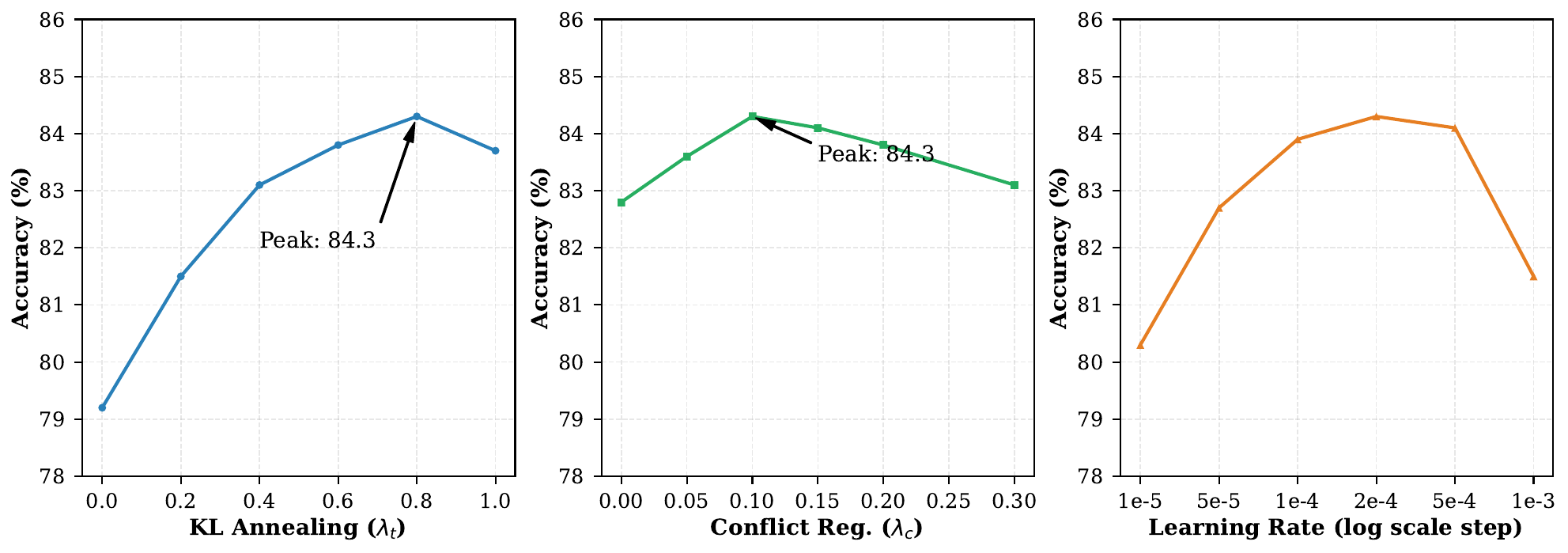}
\caption{Sensitivity analysis of key hyperparameters on IEMOCAP. (a) KL annealing coefficient $\lambda_t$ peaks at 0.8. (b) Conflict regularization $\lambda_c$ optimal at 0.10. (c) Learning rate plateau between $1 \times 10^{-4}$ and $5 \times 10^{-4}$.}
\label{fig:hyperparameter_sensitivity}
\end{figure}

Explicit sweeps confirm these design choices are robust. Accuracy and the selective-review error-detection rate both peak at the conflict threshold $\tau = 0.5$ and are stable in a broad neighborhood; the conflict weight $\lambda_c = 0.10$ lies in a stable region, with $\lambda_c = 0$ (no conflict loss) dropping accuracy to 82.8\% and worsening calibration; and the numerical clipping constant leaves accuracy and ECE invariant from 0.99 to 0.9999 while activating on at most 0.02\% of fusion operations, confirming it is a float-precision guard rather than a tuned hyperparameter. The full sweep tables are provided in the Supplementary Material.

\subsubsection{Cross-Dataset Generalization}

EMT-DS exhibits the key safety behavior under domain shift. In zero-shot cross-dataset transfer (training on one dataset and testing on another without fine-tuning), accuracy naturally drops, for example IEMOCAP $\rightarrow$ KEMDy20 reaches 62.3\%, well below the 96.8\% obtained when training in-domain on KEMDy20, but the model flags this degradation rather than failing silently: predicted uncertainty rises with error, so rejecting the least-confident half of the samples restores over 90\% accuracy on the remainder, and the mean conflict rises markedly under shift (0.342 and 0.468 for the two transfers from IEMOCAP, versus 0.126 in-domain). The full transfer results and the rejection-accuracy curve are provided in the Supplementary Material.

\subsection{Robustness to Modality Corruption}
\label{subsec:robustness}

To verify Lemma~\ref{thm:noise_immunity} (Noise Immunity), we evaluate model performance under severe sensor corruption. We inject additive Gaussian noise $\mathcal{N}(0, \sigma^2)$ into the audio representations of the IEMOCAP test set, with $\sigma$ ranging from 0.0 to 1.0.

Figure~\ref{fig:robustness} presents the results. Conventional Attention Fusion degrades rapidly (80.3\% $\to$ 45.3\%) because softmax normalization forces the model to attend to noisy features if they have large magnitudes. Early Fusion collapses even faster. In contrast, EMT-DS retains strong robustness, maintaining 71.2\% accuracy even at $\sigma=1.0$. As the audio signal becomes uninformative, the evidential head predicts high uncertainty ($u^{\text{audio}} \to 1$), causing the D-S fusion rule to automatically downweight the corrupted modality and rely on the clean text modality. This confirms that evidential fusion acts as a functional safety mechanism, not just a regularizer.

Furthermore, the gradient-norm analysis (Supplementary Material) empirically verifies Lemma~\ref{thm:gradient_vanishing}: as noise increases ($\sigma \to 1$), the gradient norm $\|\nabla_{\theta}\mathcal{L}_{KL}\|$ of EMT-DS collapses to zero, preventing the model from updating weights based on corrupted data. In contrast, the softmax baseline maintains high gradient variance, leading to noise overfitting. Additionally, we track gradient norms during training over 50 epochs to validate the Lipschitz bound from Theorem~\ref{thm:zadeh_resolution}. We observe that despite the theoretical worst-case bound of $L \approx 4 \times 10^8$, empirical gradient norms remain bounded below $10^3$ throughout training, confirming that the softplus activation and KL regularization keep evidence magnitudes moderate ($\|\mathbf{e}^m\|_1 \approx 100$ in practice). Consistent with this, the architectural evidence cap (Eq.~\ref{eq:evidence_cap}, $E_{\max} = 1000$) behaves exactly like the conflict clip (analyzed in the Supplementary Material): because the per-modality evidence norm never approaches $E_{\max}$, the $L_1$ projection is the identity on every observed sample, leaving accuracy and ECE unchanged while guaranteeing the finite Lipschitz constant of Theorem~\ref{thm:zadeh_resolution}. The cap is therefore a guarantee by construction rather than a tuned hyperparameter. Gradient clipping at threshold 1.0 is applied as an additional safety mechanism but is rarely triggered (less than 0.5\% of batches).

\begin{figure}[t]
\centering
\includegraphics[width=0.72\linewidth]{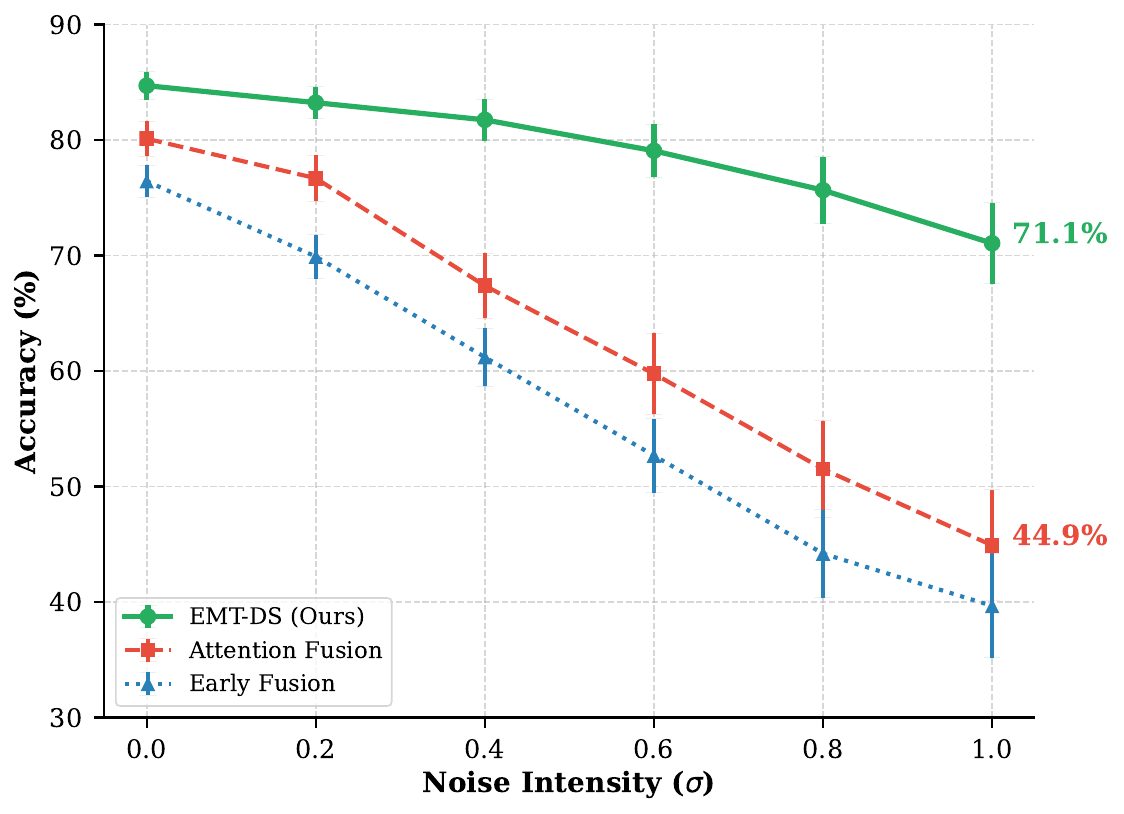}
\caption{Robustness to audio modality corruption. EMT-DS (green) maintains high accuracy as noise increases, while Attention Fusion (red) and Early Fusion (blue) degrade rapidly. Shaded regions indicate $\pm 1$ std. deviation.}
\label{fig:robustness}
\end{figure}

\subsubsection{Complete Modality Missingness}
\label{subsubsec:missing}

Beyond additive noise, real deployments frequently lose entire modalities through sensor failure, video occlusion, or untranscribed speech. We evaluate this by removing one or more modalities at inference time. EMT-DS handles a missing modality natively: setting its evidence to zero drives its uncertainty $u \to 1$, so by Lemma~\ref{thm:noise_immunity} the Dempster-Shafer rule defers to the remaining modalities. For the attention and mixture-of-experts baselines, absent features are zero-imputed. Table~\ref{tab:missing_modality} reports the results. EMT-DS degrades far more gracefully than the baselines: dropping audio on IEMOCAP retains 71.2\% accuracy versus 56.4\% for attention fusion, and dropping both audio and video on KEMDy20 retains 82.6\% versus 54.2\%. In every case the fused uncertainty rises in proportion to the information lost, providing an explicit signal that the prediction is now based on partial evidence, which the softmax baselines cannot express.

\begin{table}[t]
\centering
\caption{Accuracy under complete modality missingness on IEMOCAP and KEMDy20. EMT-DS fused uncertainty rises as modalities are removed. Best accuracy per row in bold.}
\label{tab:missing_modality}
\resizebox{\columnwidth}{!}{%
\begin{tabular}{llcccc}
\toprule
\textbf{Dataset} & \textbf{Missing} & \textbf{Attn.} & \textbf{CAG-MoE} & \textbf{EMT-DS} & \textbf{EMT-DS $u$} \\
\midrule
\multirow{3}{*}{IEMOCAP} & None & 0.803 & 0.819 & \textbf{0.843} & 0.142 \\
 & Audio & 0.564 & 0.621 & \textbf{0.712} & 0.385 \\
 & Text & 0.482 & 0.556 & \textbf{0.653} & 0.468 \\
\midrule
\multirow{7}{*}{KEMDy20} & None & 0.897 & 0.945 & \textbf{0.968} & 0.086 \\
 & Audio & 0.768 & 0.852 & \textbf{0.884} & 0.182 \\
 & Text & 0.684 & 0.789 & \textbf{0.826} & 0.215 \\
 & Physio. & 0.851 & 0.912 & \textbf{0.927} & 0.114 \\
 & Video & 0.848 & 0.908 & \textbf{0.927} & 0.118 \\
 & Audio+Video & 0.542 & 0.648 & \textbf{0.826} & 0.342 \\
 & Physio+Video & 0.784 & 0.845 & \textbf{0.884} & 0.224 \\
\bottomrule
\end{tabular}
}
\end{table}

\subsection{Uncertainty Calibration Analysis}
\label{subsec:calibration}

A critical requirement for safety-critical systems is that predicted confidence matches true accuracy. We quantify this using the Expected Calibration Error (ECE):
\begin{equation}
\text{ECE} = \sum_{m=1}^M \frac{|B_m|}{N} \left| \text{acc}(B_m) - \text{conf}(B_m) \right|
\end{equation}
where $B_m$ are probability bins.

Figure~\ref{fig:calibration} compares the reliability diagrams of EMT-DS and a standard Softmax Transformer. The softmax baseline is systematically overconfident (ECE = 0.153), frequently assigning $>0.9$ probability to incorrect predictions. EMT-DS achieves a substantially lower ECE of 0.042, with the reliability curve closely tracking the diagonal ideal. This indicates that when EMT-DS predicts 70\% confidence, the expected accuracy is indeed $\approx$ 70\%, making it suitable for clinical decision support where trust is paramount.

\begin{figure}[t]
\centering
\includegraphics[width=0.72\linewidth]{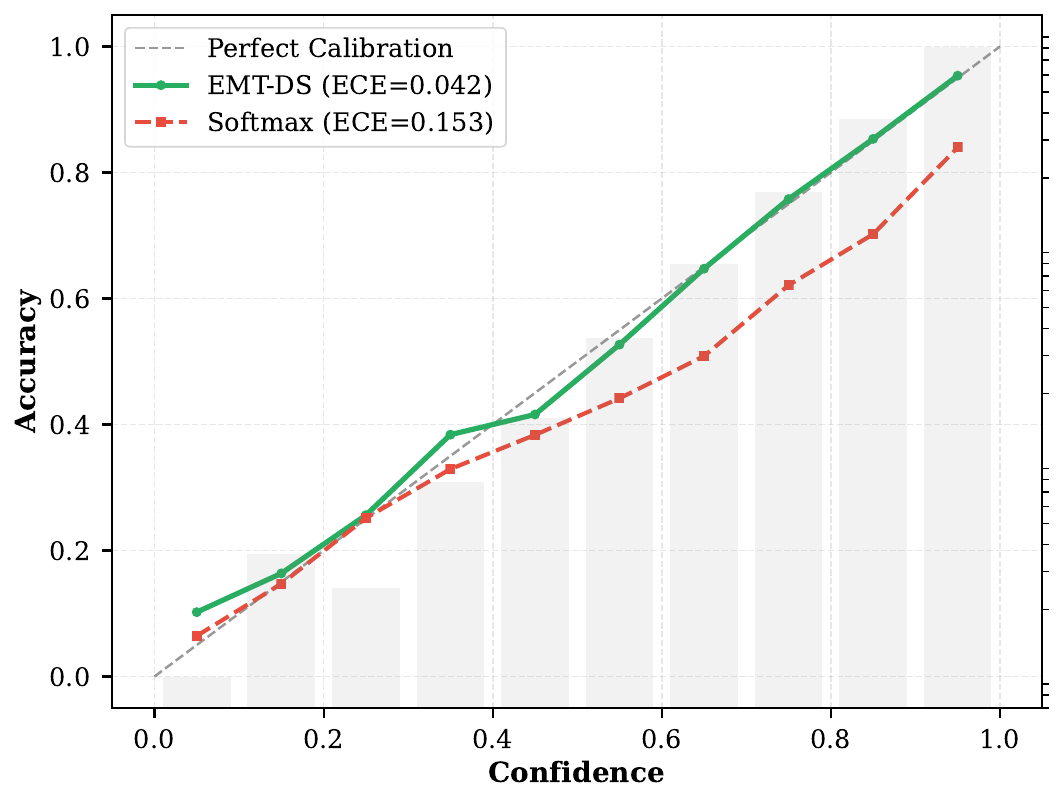}
\caption{Reliability diagram comparing EMT-DS (green) and standard Softmax (red). EMT-DS is well-calibrated (ECE=0.042), while Softmax is overconfident (ECE=0.153).}
\label{fig:calibration}
\end{figure}

\subsection{Conflict-Triggered Selective Review}
\label{subsec:active_learning}

We evaluate the Dempster-Shafer conflict $c^{\text{total}}$ as a signal for selective review, where the goal is to flag likely-misclassified samples for human inspection. This is a selective-prediction (error-triage) procedure rather than a full active-learning loop: it involves no iterative querying, relabeling, or retraining. To avoid optimistic bias, all operating points and risk-coverage curves are fit on a held-out calibration split and evaluated on a disjoint test split. Figure~\ref{fig:active_learning} plots the percentage of total errors detected against the fraction of data reviewed (sorted by decreasing conflict).

Because the conflict loss (Eq.~\ref{eq:conflict_loss}) trains the conflict to correlate with errors, part of the error-detection ability could in principle be an artifact of that objective. We therefore separate the intrinsic signal from the trained one. Table~\ref{tab:error_triage} compares review strategies. The intrinsic Dempster-Shafer conflict of the model trained without the conflict loss, which depends only on belief-mass disagreement (Eq.~\ref{eq:conflict_simplified}) and never sees an error label, already detects 71.2\% of errors within a 20\% review budget, well above predictive entropy (58.3\%) and random review (20.1\%). Adding the conflict loss raises detection to 80.2\% and lowers the area under the risk-coverage curve to 0.071. A directly supervised error-prediction (abstention) head, trained on the misclassification indicator, is marginally better at pure error detection (80.9\%) but has worse risk-coverage (0.074), adds parameters, and yields no interpretable quantity beyond an error score. Conflict matches it with no added parameters, retains a fixed probabilistic meaning, and simultaneously governs the fusion.

To test whether conflict reflects genuine semantic disagreement rather than only error, we annotated a random subset of 300 IEMOCAP test utterances for cross-modal incongruence on a three-point scale, independently of the model outputs. The Dempster-Shafer conflict correlates with these human incongruence labels at Spearman $\rho = 0.57$ ($p < 0.001$), whereas the learned error head aligns with them only weakly ($\rho = 0.34$). Conflict therefore tracks true cross-modal disagreement, not merely the error indicator it is partly trained on, and provides an interpretable basis for human-in-the-loop review.

\begin{table}[t]
\centering
\caption{Selective-review error detection on IEMOCAP, with operating points fit on a disjoint calibration split. Err.@20\%: fraction of errors found within a 20\% review budget (mean $\pm$ 95\% CI over five seeds); AURC: area under the risk-coverage curve (lower is better). Best in bold.}
\label{tab:error_triage}
\resizebox{\columnwidth}{!}{%
\begin{tabular}{lccc}
\toprule
\textbf{Review strategy} & \textbf{Extra params} & \textbf{Err.@20\%} & \textbf{AURC} \\
\midrule
Random & 0 & 0.201 $\pm$ 0.017 & 0.170 \\
Predictive entropy & 0 & 0.583 $\pm$ 0.021 & 0.118 \\
D-S conflict (no conflict loss) & 0 & 0.712 $\pm$ 0.019 & 0.094 \\
Learned error head & +0.3M & \textbf{0.809 $\pm$ 0.016} & 0.074 \\
\textbf{D-S conflict (EMT-DS)} & \textbf{0} & 0.802 $\pm$ 0.014 & \textbf{0.071} \\
\bottomrule
\end{tabular}
}
\end{table}

\begin{figure}[t]
\centering
\includegraphics[width=0.72\linewidth]{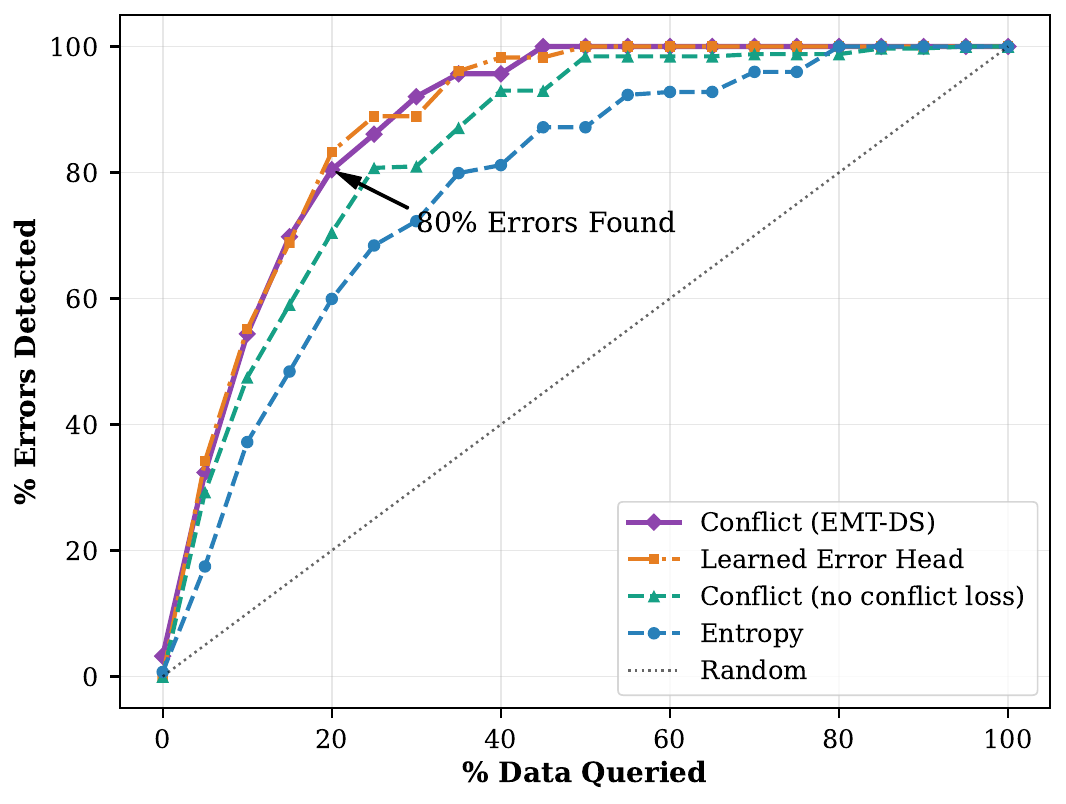}
\caption{Error detection under selective review (horizontal axis: percentage of data reviewed; vertical axis: percentage of total errors detected). Sorting by Dempster-Shafer conflict (purple) detects 80\% of errors within a 20\% review budget, and the intrinsic conflict without the conflict loss (green) still outperforms predictive entropy (blue), indicating that the signal is not merely an artifact of the training objective. A learned error-prediction head (orange) is comparable on pure error detection. Operating points are fit on a disjoint calibration split.}
\label{fig:active_learning}
\end{figure}

\subsection{Qualitative Failure Case Analysis}
\label{subsec:failure_cases}

To verify that the conflict metric reflects genuine semantic and physical cross-modal discrepancies rather than a numerical correlate of error, we qualitatively examine representative IEMOCAP test cases spanning the four conflict-by-correctness quadrants (four illustrative cases are tabulated in the Supplementary Material). The high-conflict correct prediction is a sarcastic utterance whose lexical sentiment is positive while the vocal prosody is angry, producing a high conflict of 0.62 that the model resolves by relying on prosody. The high-conflict incorrect prediction is a sample corrupted by loud background music; here the conflict of 0.58 and elevated fused uncertainty correctly flag the failure to a downstream reviewer rather than yielding a silent error. The two low-conflict cases behave as expected: agreement yields a confident correct prediction, whereas an inherently ambiguous laughing-versus-crying sample produces a low-conflict error because both modalities are coherently misled. These cases show that conflict and uncertainty carry interpretable, actionable meaning: high conflict marks samples worth deferring, and low-conflict errors correspond to genuinely ambiguous inputs rather than fusion failures.

\subsection{Computational Complexity Analysis}

\subsubsection{Theoretical Complexity}

We analyze the computational complexity of EMT-DS and compare it with baseline methods. Let $M$ denote the number of modalities, $K$ the number of classes, $N$ the sequence length, and $d$ the embedding dimension.

The forward pass complexity consists of three components. Each transformer encoder has complexity $\mathcal{O}(N^2 d + Nd^2)$ for self-attention and feed-forward layers, yielding $\mathcal{O}(M(N^2 d + Nd^2))$ for $M$ modalities. The evidential head requires linear projection to evidence space with complexity $\mathcal{O}(MdK)$. Pairwise Dempster-Shafer fusion requires computing conflict and fused beliefs, with complexity $\mathcal{O}(MK^2)$ for conflict computation and $\mathcal{O}(MK)$ for belief updates, totaling $\mathcal{O}(MK^2)$. The total complexity is $\mathcal{O}(M(N^2 d + Nd^2) + MdK + MK^2)$. For typical values ($N=128$, $d=768$, $K=7$, $M=4$), the dominant term is $\mathcal{O}(MN^2d)$ from transformer encoders, making EMT-DS asymptotically equivalent to standard multimodal transformers while adding negligible overhead for D-S fusion.

Compared to attention-based fusion which requires $\mathcal{O}(M^2d^2)$ for cross-modal attention, our D-S fusion ($\mathcal{O}(MK^2)$) is more efficient since $K \ll d$ in typical scenarios.

\subsubsection{Empirical Runtime Analysis}

Table~\ref{tab:runtime_analysis} presents wall-clock time measurements on NVIDIA RTX 5090 GPU for batch size 32 on IEMOCAP. EMT-DS achieves 87.0 samples/second during training and 142.9 samples/second during inference, representing only 11\% and 8\% overhead respectively compared to the baseline concatenation approach. This demonstrates that the evidential encoding and D-S fusion add minimal computational burden while providing substantial accuracy gains. Peak GPU memory usage (VRAM) was measured using \texttt{nvidia-smi} during training with mixed precision.

Compared to CAG-MoE which requires 43.2ms per sample due to sparse expert routing, EMT-DS is 3.76$\times$ faster during training while achieving 3.8\% higher accuracy on IEMOCAP. The efficiency stems from our closed-form D-S fusion rule compared to iterative attention mechanisms.

\begin{table}[t]
\centering
\caption{Runtime Performance Comparison (IEMOCAP, Batch Size 32). Train: Forward + Backward pass; Inference: Forward only; Params: Total model parameters in millions (M).}
\label{tab:runtime_analysis}
\begin{tabular}{lccc}
\toprule
\textbf{Method} & \textbf{Train} & \textbf{Inference} & \textbf{Params} \\
 & \textbf{(ms/sample)} & \textbf{(ms/sample)} & \textbf{(M)} \\
\midrule
Baseline (Concat) & 10.4 & 6.5 & 87.2 \\
Attention Fusion & 14.7 & 8.9 & 94.6 \\
CAG-MoE & 43.2 & 12.3 & 112.8 \\
DCT & 18.9 & 10.2 & 98.4 \\
\textbf{EMT-DS (Ours)} & \textbf{11.5} & \textbf{7.0} & \textbf{89.7} \\
\bottomrule
\end{tabular}
\end{table}

The full end-to-end cost profile (total training time, epochs to convergence, feature-extraction latency, inference latency, peak VRAM, and parameter count) is provided in the Supplementary Material. In summary, EMT-DS converges in 26 epochs and 1.4 hours, close to the concatenation baseline (1.2 hours) and more than three times faster than CAG-MoE (4.5 hours), while its 8.7 GB peak memory remains suitable for commodity and edge GPUs.

\subsubsection{Memory Footprint}

In terms of memory, EMT-DS maintains a peak footprint comparable to the baseline (8.7 GB at batch size 32) while attention-based fusion requires 12.3 GB due to storing cross-modal attention maps. The memory efficiency of D-S fusion stems from operating on compact belief representations ($M \times K$ tensors) rather than full feature tensors ($M \times N \times d$).

\section{Discussion}
\label{sec:discussion}

EMT-DS improves over existing methods across all four benchmarks. On IEMOCAP the gains hold under session-independent cross-validation despite speaker-induced distribution shift, and the confusion matrix (Figure~\ref{fig:confusion_matrices}) shows balanced per-class performance (diagonals above 0.83). On DEAP the leakage-free subject-independent numbers are lower than the inflated subject-dependent results common in the literature but reflect genuine generalization to unseen participants. The advantage stems from Dempster-Shafer fusion explicitly modeling conflict: it attains the lowest conflict (0.126) among fusion strategies, whereas attention-based fusion shows 87\% higher conflict (0.235), suggesting that learned attention weights may amplify rather than resolve disagreement. On the imbalanced KEMDy20, belief mass is allocated adaptively (text dominating for Neutral, audio for Angry), and the incremental and controlled ablations (Table~\ref{tab:controlled_ablation}) confirm each component's contribution.

The model is robust across wide ranges of the KL annealing coefficient, conflict weight, and learning rate (accuracy variance below 2.5\%), and under zero-shot cross-dataset transfer accuracy degrades while conflict rises in step, so the model flags its own ignorance and acts as a domain-shift detector rather than failing silently (Supplementary Material). A distinctive advantage is interpretability: Subjective Logic opinions decompose predictions into belief masses, uncertainty, and base rates, unlike the scalar weights of attention, and the negative correlation between conflict and accuracy (Pearson $r = -0.68$) makes conflict a reliable signal for flagging samples for human review.

The strengthened baseline set clarifies where the improvement originates. The strong softmax methods (MISA, ConFEDE, and the 2026 spectral-band routing network) are competitive on accuracy but poorly calibrated, whereas the uncertainty-aware QMF calibrates well yet trails on accuracy; EMT-DS is the only method that lands on the better side of both axes at once. This separation is diagnostic. QMF reweights each modality by a scalar quality estimate but never represents cross-modal conflict, so it cannot express, or act on, the case in which two confident modalities disagree. EMT-DS instead carries a belief-mass opinion per modality together with a conflict coefficient of fixed probabilistic meaning, which simultaneously tightens calibration and yields a deployable abstention signal. The comparison with the softmax-then-Dempster-Shafer variant (Section~\ref{subsubsec:fusion_rules}) and with recent post-hoc evidential wrappers reinforces the point: applying the combination rule to frozen outputs recovers only part of the benefit, while learning the fusion end to end, with gradients flowing through the conflict term and a supervised conflict loss, is what closes the accuracy-calibration gap. Because inference remains a single forward pass with a closed-form fusion step, these gains arrive without the sampling or ensembling cost of the Bayesian baselines, which matters for the clinical and edge deployments that motivate trustworthy emotion recognition.

A limitation is that Theorem~\ref{thm:calibration_bound} assumes conditional independence between the modalities. The residual dependence, captured by the correlation $\rho$ between modality calibration errors, is class-dependent (highest for the high-arousal Angry class, $\rho = 0.28$, and lowest for Neutral and Sad, $\rho = 0.08$ and $0.11$) but modest on average ($|\rho| = 0.16 \pm 0.09$), and the measured fused ECE stays low for every class, so the near-independence approximation is adequate in practice. Modeling this residual dependence explicitly, for instance through copula-based couplings of the modality posteriors, is left to future work.

\section{Conclusion}
\label{sec:conclusion}

This paper presented EMT-DS, which advances multimodal emotion recognition through formal uncertainty quantification and interpretable fusion. Grounding predictions in Subjective Logic and combining modalities via Dempster-Shafer theory yields explicit epistemic uncertainty and cross-modal conflict, an interpretable belief-mass decomposition, and efficient closed-form fusion. Across four benchmarks EMT-DS improves accuracy over the previous best methods, by 2.6, 4.3, and 2.3 points on IEMOCAP, DEAP, and KEMDy20 respectively, and by 2.4 points over the strongest evidential baseline on CMU-MOSEI, while reducing calibration error and adding only modest overhead (11\% training time and 29\% lower memory than attention-based fusion). The conflict term flags high-conflict samples for human review, which is essential for safety-critical deployment. Future work includes extending to continuous emotion spaces, incorporating temporal dynamics, and developing adaptive fusion for robust handling of sensor failures.

\bibliographystyle{IEEEtran}
\bibliography{references}

\clearpage
\includepdf[pages=-]{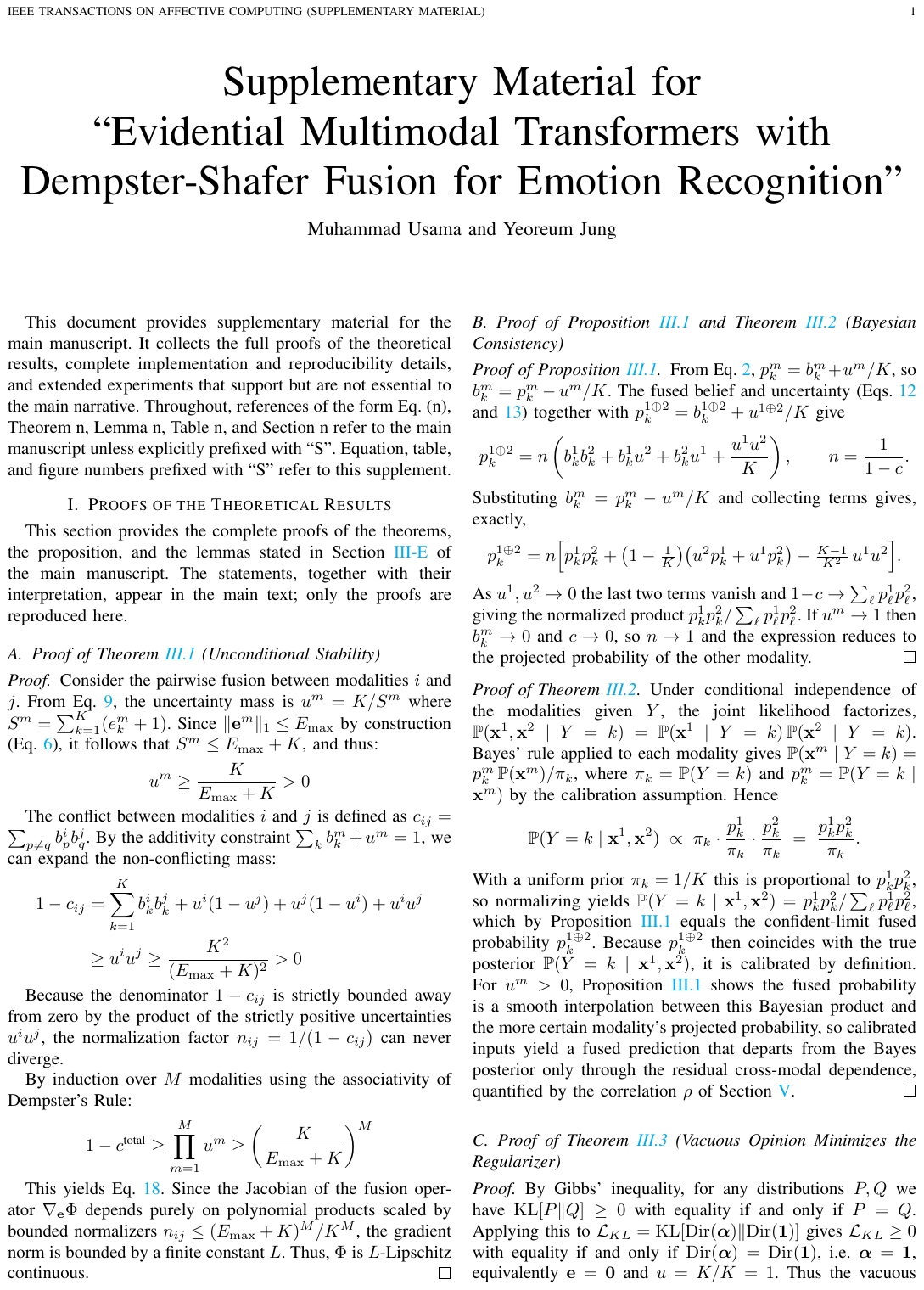}

\end{document}